\documentclass[10pt,fleqn]{article}

\usepackage[margin=1.5in]{geometry}
\usepackage{amssymb}
\usepackage{latexsym}
\usepackage{amsmath}
\usepackage{amsthm}
\usepackage{graphicx}
\usepackage{authblk}
\usepackage{hyperref}
\usepackage{tipa}
\usepackage{float}
\usepackage{textgreek}
\usepackage{booktabs}
\floatstyle{plaintop}
\restylefloat{table}

\usepackage[OT2,OT1]{fontenc}
\usepackage[russian,english]{babel}

\providecommand{\keywords}[1]
{
   \textbf{{Keywords}} #1
}

\title{A Graph Diffusion Algorithm for Lexical Similarity Evaluation}
\author[1]{Karol Mikula}
\author[1]{Mariana Sarkociov\'{a} Reme\v{s}\'{\i}kov\'{a}}
\affil[1]{Department of Mathematics and Constructive Geometry, Faculty of Civil Engineering, Slovak University of Technology, Bratislava, Slovakia}

\usepackage{fancyhdr}
\pagestyle{fancy}
\makeatletter
\makeatother
\lhead{Mikula, Sarkociov\'a}
\rhead{A Graph Diffusion Algorithm for Lexical Similarity Evaluation}

\DeclareMathAlphabet{\pazocal}{OMS}{zplm}{m}{n}

\newcommand\mt{t\kern-0.035cm\char39\kern-0.03cm}
\newcommand\ml{l\kern-0.035cm\char39\kern-0.03cm}
\newcommand\md{d\kern-0.035cm\char39\kern-0.03cm}
\newcommand{\mL}{L\kern-0.14cm\char39}

\theoremstyle{definition}
\newtheorem{theorem}{Theorem}[section]

\newtheorem{prop}[theorem]{Proposition}
\newtheorem{rem}[theorem]{Remark}

\sloppy
\nonfrenchspacing

\begin{document}


\date{}

\maketitle
\keywords{data classification, lexical similarity, graph diffusion, weighted graph Laplacian, phonetic distance}


\vspace{1cm}

\begin{abstract}
	In this paper, we present an algorithm for evaluating lexical similarity between a given language and several reference language clusters. As an input, we have a list of concepts and the corresponding translations in all considered languages. Moreover, each reference language is assigned to one of $c$ language clusters. For each of the concepts, the algorithm computes the distance between each pair of translations.  Based on these distances, it constructs a weighted directed graph, where every vertex represents a language. After, it solves a graph diffusion equation with a Dirichlet boundary condition, where the unknown is a map from the vertex set to $\mathbb{R}^c$. The resulting coordinates are values from the interval $[0,1]$ and they can be interpreted as probabilities of belonging to each of the clusters or as a lexical similarity distribution with respect to the reference clusters. The distances between translations are calculated using phonetic transcriptions and a modification of the Damerau-Levenshtein distance. The algorithm can be useful in analyzing relationships between languages spoken in multilingual territories with a lot of mutual influences. We demonstrate this by presenting a case study regarding various European languages.
\end{abstract}


\section{Introduction}

Lexical similarity is one of the key criteria in examining relationships between languages. Given two languages and a list of concepts, it measures the ratio of similar corresponding terms in that particular part of vocabulary. The resemblance can be measured using various characteristics of words and various metrics based on those characteristics.

In computational linguistics, there are several common approaches to lexical similarity evaluation. One of the basic properties of any word is its etymology and the etymology based lexical similarity has been used for decades in lexicostatistics. This discipline is based on the ideas of Morris Swadesh \cite{Sw1,Sw2} and typically uses his list of basic concepts, the so-called Swadesh list. This approach has been widely applied in classifying languages and supporting hypotheses regarding their origin, for example in the works of Dyen et al. \cite{Dyen}, Gray and Atkinson \cite{GA}, Kassian et al. \cite{Kass}, Petroni and Serva \cite{PS}.

Besides etymology, phonetic similarities are often explored and used to classify languages and dialects. The related algorithms measure distances between phonetic transcriptions of words or sentences, most typically using Levenshtein distance or some of its modifications, less commonly the Euclidean distance or some other metric. Some examples are classification of Irish Gaelic dialects by Kessler \cite{Kessler}, Norwegian dialects by Heeringa \cite{Heer}, Dutch dialects by Nerbonne et al. \cite{Nerbonne} and Indonesian languages by Nasution et al. \cite{Nasu}. Another example is prediction of mutual intelligibility of Chinese dialects by Tang and Van Heuven \cite{Tang} or exploring the similarity of a Native American language with several other well-known languages in the work of Daniels et al. \cite{Daniels}.

To mention some other approaches, a character-based classification of Japanese dialects and Slavic languages can be found in the work of Sato and Heffernan \cite{Sato}. A completely different procedure is proposed in the paper of Lin \cite{Lin}, where the similarity of words is evaluated based on the context in which they typically appear. 

The main motivation for our work was exploring the similarity in specific subsets of the lexis and between languages that are spoken in multilingual territories. Such languages often contain parts of vocabulary that are in discordance with their standard overall classification. Identifying these parts can help understand the language and its evolution and possibly even provide an insight in the way of life of their speakers and their relationships with other ethnic groups.

The algorithm that we present is based on solving the stationary diffusion equation with a boundary condition on a weighted directed graph. The vertices of the graph represent the languages included in the experiment, which are of three different types. The first group consists of languages with known classification and these are divided into several clusters (language branches). The second group contains languages that we want to classify because we assume a notable foreign influence or resemblance with languages from different branches and we want to analyze it. Finally, for each of the involved language clusters, we add one hypothetical language which is used to set the boundary condition of the model. Given a list of concepts, we solve our model equation for each concept individually. The edge weights are re-computed in each step based on the phonetic distance between the corresponding words. As the solution, we obtain a Euclidean graph that reflects the similarities between all corresponding words from all considered non-hypothetical languages. The overall lexical similarity between languages is then evaluated based on all individual results.

Our procedure is inspired by the work of Mikula et al. \cite{Mikula} that introduces a clustering and classification algorithm based on non-stationary non-linear diffusion on graphs, with no boundary condition. The algorithm was successfully applied in classification of Natura 2000 habitats using satellite images. The vertices of the involved graph correspond to the samples measured by a satellite. Based on similarity of the measured features, the coordinates of the samples in the feature space are evolved by diffusion, either forward or backward. As a result, new measurements could be classified and inserted in the already existing clusters.

Contrarily to the satellite imaging measurements, in case of words and languages, it is not straightforwardly clear what the feature space should be. In our case, each experiment includes languages from several distinct clusters (language families or branches). Therefore, we set the features of a word to be its probabilities of belonging to each of the clusters. Or, in other words, the values of the features represent the similarity distribution of each classified word with respect to the reference language clusters. The overall output of our algorithm is also understood in this way -- rather than providing the lexical similarity in the form of the ratio of similar words, we obtain the lexical similarity distribution for the given language and the considered language clusters.

There are many algorithms for clustering and classification operating with probabilities and it is also not uncommon to use the graph Laplacian for clustering on graphs, see e.g. \cite{Bertozzi,Ding,Yu}. However, to the best of our knowledge, the procedure described in this paper has not been previously used. Its advantages are its simple and elegant formulation and a rather non-complicated implementation. Equally importantly, our clustering and classification algorithm only needs a low-key input: any type of data and an arbitrary way of measuring the distances between them. The data need not be expressed in any feature coordinate system, they are simply elements of a data set.

Our paper consists of several parts. First, we formulate the basic mathematical model. After, since our experiments use the phonetic transcription of words, we describe the semi-metric used for measuring the distance between such transcriptions. Finally, we present results of experiments concerning several interesting European languages. The concept set that we used consists of traditional hand and agricultural tools and other related concepts. 


\section{Mathematical model}\label{SecMathematicalModel}

Before we start explaining our model, let us first clear out some terminology. By a {\em translation} of a concept, we mean a list of words (synonyms) that are used for the given concept in a given language. Given the concept list, our algorithm takes the concepts one-by-one and evaluates the relationships between its translations in all given languages. 

What we are going to describe now is the procedure performed for an individual concept. Focusing on a single concept, each translation has the role of representing the whole language to which it belongs. Keeping in mind this correspondence, we will often explain our thoughts in terms of languages rather than translations. In certain contexts, it allows to grasp the principles of our method in a more to-the-point way. 

As we have already mentioned, our algorithm considers three groups of languages. The first group is composed of languages that are considered to be already classified (belonging to some language cluster). In the rest of the paper, we will call them {\em reference languages}. The second group contains the languages that we would like to classify and they will be referred to as {\em classified languages}. In the third group, we have the {\em hypothetical languages} representing each of the clusters. For a given concept, the hypothetical languages can be thought of as languages that use all terms found in their corresponding clusters as synonyms. 

Now, let us establish some notation that our model uses. In what follows,
\begin{itemize}
	\item{$n$ is the number of reference languages,}
	\item{$m$ is the number of classified languages,}
	\item{$c$ is the number of reference clusters that we are working with.}
\end{itemize}
Further, in our description, we will use several graphs where each vertex represents one of the involved languages. In addition to the above notation, we will use three related functions: $\nu$, $\lambda$ and $head$.
\begin{itemize}
	\item{If a graph vertex $v$ represents a reference language, then $\nu(v)$ is the number of languages in the corresponding reference cluster. If $v$ represents a hypothetical language, then $\nu(v)=\nu_0$, where ${\nu}_0$ is a positive constant.}
	\item{For any graph edge $e$, $\lambda(e)$ is the distance between the translations represented by the endpoints of $e$ (measured in any chosen way).}
	\item{If $e$ is a directed edge, then $head(e)$ is its head.}
\end{itemize}

We will build up our model in three steps. In the first step, let us assume that all reference clusters have an equal number of elements and let us consider only the reference languages and the hypothetical languages. Let us have an undirected graph $G_0=(V_0,E_0)$, where the vertices $v_1,\dots,v_n$ represent the reference languages and the vertices $h_1,\dots,h_c$ correspond to the hypothetical languages and are designated as the boundary of the graph (in the sense of Friedman and Tillich \cite{FT}, i.e. they are vertices where a boundary condition will be prescribed). The set $E_0$ contains edges connecting any $v_i$ and $v_j$, $i=1,\dots,n$, $j=1,\dots,n$, $i\neq j$. In addition, each vertex $v_i$, $i=1,\dots,n$, is connected with the vertex representing its corresponding hypothetical language. 

The graph $G_0$ together with the distance measuring function $\lambda$ are our starting point: $G_0$ says which languages have a chance to influence each other and $\lambda$ encodes the similarity between all pairs of translations. What we want to obtain is a Euclidean graph that reflects the similarities given by $\lambda$, i.e. similar translations should form clusters or at least approach each other. As we have mentioned in the introduction, the Euclidean coordinates of a language represent its probabilities of belonging to each of the $c$ reference clusters. Thus, our feature space is $\mathbb{R}^c$ and we are looking for a Euclidean graph $\mathcal{G}_0=(G_0,\varphi)$, where $\varphi\colon V_0\rightarrow\mathbb{R}^c$. A natural and elegant way of clustering is non-uniform diffusion which smooths out the differences between data that allow an intense diffusion flux between them. Let us therefore consider a diffusion intensity function $g\colon\mathbb{R}\rightarrow\mathbb{R}$, which is a positive decreasing function. Based on $g$, we make $G_0$ a weighted graph where the weight of and edge $e$ is $w_0(e)=g(\lambda(e))$. Then, we find $\varphi$ as the solution of the equation
\begin{equation}
	\Delta_{w_0}\varphi=0, \quad \varphi(h_i)=\iota_i, \, i=1,\dots,c,
	\label{EqLaplace}
\end{equation}
where $\iota_i$ are the standard basis vectors of $\mathbb{R}^c$. The operator $\Delta_{w_0}$ is the weighted graph Laplacian whose value at the vertex $v$ is
\begin{equation}
	(\Delta_{w_0} \varphi)(v)  = \sum\limits_{e:u\sim v} w_0(e)(\varphi(u)-\varphi(v)),
	\label{EqGraphLaplace}
\end{equation}
where $e:u\sim v$ means that the edge $e$ connects $u$ and $v$.

Let us now explain how exactly the model (\ref{EqLaplace})--(\ref{EqGraphLaplace}) works. The points $\varphi(v_i)$, $i=1,\dots,n$, are mutually attracted according to the diffusion intensity $g$, which depends on the similarity between the corresponding translations. However, even though the similarities differ, just setting $\Delta_{w_0}\varphi$ equal to $0$ would cause all $\varphi(v_i)$ collapsing into a single point. To prevent that, Mikula et al. \cite{Mikula} suggested to use backward diffusion causing repulsion between clusters. In our case, we consider the hypothetical languages that are natural representatives of the clusters and we set the Dirichlet boundary condition using them. Since the hypothetical languages are assumed to contain all terms from their corresponding clusters, their distance from any language in that cluster is set to zero (see Section \ref{SecDealingWithSynonyms} for details). This implies that the corresponding diffusion coefficient has the maximum possible value and all languages are strongly attracted to their hypothetical language. This reflects the a-priori information about where they belong. However, they will be more or less attracted also to all other reference languages. If a translation does not find a similar counterpart anywhere else, it will remain close to its hypothetical language and its similarity distribution will be close to a basis vector. If there is a similar term in another cluster, it will be shifted towards it and the length of the shift will depend on the similarity rate. Also, the larger the number of similar terms in the other cluster, the more prominent the shift. 

Of course, many times, the assumption of an equal number of elements in all reference clusters does not meet reality -- there are language branches that contain a lot of languages and others that contain just one or two. In this case, the basic model (\ref{EqLaplace})--(\ref{EqGraphLaplace}) might lead to unrealistic results. To see this, let us consider a simple example with only two reference clusters. The first cluster contains $n_1$ languages represented by $v_1,\dots,v_{n_1}$ and the second cluster consists of $n_2=n-n_1$ languages represented by $v_{n_1+1},\dots,v_n$. To illustrate the essence of the problem, let us assume that all translations in both clusters are identical. Of course, this not something that we normally see in practice, however, we do encounter similar situations when the translations across clusters do not differ a lot. Under the assumption of identity, $w_0(e)$ has the same value for every $e\in E_0$. Also, all languages within an individual cluster are equally similar to each other, to their hypothetical language and to all languages from the other cluster. Thus, $\varphi$ will map all of them onto the same point in $\mathbb{R}^2$. Let us use the notation $\varphi_1=\varphi(v_1)=\dots=\varphi(v_{n_1})$ and $\varphi_2=\varphi(v_{n_1+1})=\dots=\varphi(v_{n})$. The equations (\ref{EqLaplace}) and (\ref{EqGraphLaplace}) then lead to
\begin{eqnarray*}
	n_2(\varphi_2-\varphi_1)+(\iota_1-\varphi_1) & = & 0, \\
	n_1(\varphi_1-\varphi_2)+(\iota_2-\varphi_2) & = & 0.
\end{eqnarray*}
The solution of this system is
\[ \varphi_1  =  \frac{n_1+1}{n_1+n_2+1}\iota_1+\frac{n_2}{n_1+n_2+1}\iota_2, \quad \varphi_2  =  \frac{n_1}{n_1+n_2+1}\iota_1+\frac{n_2+1}{n_1+n_2+1}\iota_2.\]
For a fixed $n_1$, we can see that the bigger the value of $n_2$, the lower the similarity between any language from the first cluster and its own cluster. In fact, if $n_1=1$, then as soon as $n_2=3$, the language is already more similar to the second cluster ($\varphi_1=0.4\iota_1+0.6\iota_2$). For $n_2\rightarrow\infty$, we have $\varphi_1\rightarrow\iota_2$ and also $\varphi_2\rightarrow\iota_2$, which means that the limit of the similarity with the first cluster is zero. This is not what we would expect as a realistic result.

In the second step, we adjust the basic model to compensate for this unwanted side effect. We do this by considering an directed graph $G_1=(V_0,E_1)$ with the same vertex set as $G_0$ but with directed edges. For each $v_i$ and $v_j$, $i=1,\dots,n$, $j=1,\dots,n$, $i\neq j$, $E_1$ contains both edges $v_i\rightarrow v_j$ and $v_j\rightarrow v_i$. In addition, for each $i=1,\dots,n$, we have an edge directed from $v_i$ to the boundary vertex representing the corresponding hypothetical language. The edge weights are now, in general, asymmetric and defined as
\begin{equation}
	w(e)=\frac{w_0(e)}{\nu(head(e))}.
	\label{Eqw}
\end{equation}
Note that for all edges whose heads are hypothetical languages, we have the same weight $g(0)/\nu_0$. With this setting, we now have the model with an adjusted operator
\begin{equation}
	\Delta_{w}\varphi=0, \quad \varphi(h_i)=\iota_i, \, i=1,\dots,c,
	\label{EqLaplaceFinal}
\end{equation}
\begin{equation}
	(\Delta_{w} \varphi)(v)  = \sum\limits_{e:v\rightarrow u} w(e)(\varphi(u)-\varphi(v)),
	\label{EqGraphLaplaceFinal}
\end{equation}
where $e\colon v\rightarrow u$ means that the edge $e$ is directed from $v$ to $u$. 

Let us verify that adding this asymmetry resolves our trouble with clusters of a different size. We use the same setting as before, with two clusters containing $n_1$ and $n_2$ identical translations. From (\ref{EqLaplaceFinal})--(\ref{EqGraphLaplaceFinal}), we now get
\begin{eqnarray*}
	n_2\frac{1}{n_2}(\varphi_2-\varphi_1)+\frac{1}{\nu_0}(\iota_1-\varphi_1) & = & 0, \\
	n_1\frac{1}{n_1}(\varphi_1-\varphi_2)+\frac{1}{\nu_0}(\iota_2-\varphi_2) & = & 0.
\end{eqnarray*}
This system has the solution
\begin{equation}
	\varphi_1  =  \frac{\nu_0+1}{2\nu_0+1}\iota_1+\frac{\nu_0}{2\nu_0+1}\iota_2, \quad \varphi_2  =  \frac{\nu_0}{2\nu_0+1}\iota_1+\frac{\nu_0+1}{2\nu_0+1}\iota_2.
	\label{EqSolutionNu0}
\end{equation}
As we can see, this solution does not depend on the cluster sizes $n_1$ and $n_2$ and it is only determined by the value of $\nu_0$. Moreover, we have
\[ \Vert\varphi_1-\iota_1\Vert<\Vert\varphi_1-\iota_2\Vert, \quad \Vert\varphi_2-\iota_2\Vert<\Vert\varphi_2-\iota_1\Vert \]
and
\[\frac{\varphi_1+\varphi_2}{2}=\frac{\iota_1+\iota_2}{2}. \]
This means that the clusters will be placed symmetrically around the arithmetic mean of $\iota_1$ and $\iota_2$ and any language will be always more similar to its own cluster than to the other cluster. For $\nu_0\rightarrow\infty$, the similarity will converge to $\frac{1}{2}$. An example with real world data validating this approach will be shown later in Section \ref{SecCaseStudy} (Figure \ref{FigClustersSize2}).

At last, to construct our final graph $G=(V,E)$, we include the classified languages. These languages are attracted to the reference languages according to their similarity, but they themselves do not attract any other language. Thus, $V$ is constructed from $V_0$ by adding $m$ vertices $l_1,\dots,l_m$ and $E$ is created by adding $mn$ edges directed from $l_i$ to $v_j$, $i=1,\dots,m$, $j=1,\dots,n$, to the edge set $E_1$. The edge weights in the whole graph are set according to (\ref{Eqw}) and after, we apply the model (\ref{EqLaplaceFinal})--(\ref{EqGraphLaplaceFinal}). The final setting is illustrated in Figure \ref{FigGraphs}.

\begin{figure}[h]
	\centering
	\includegraphics[width=0.65\textwidth]{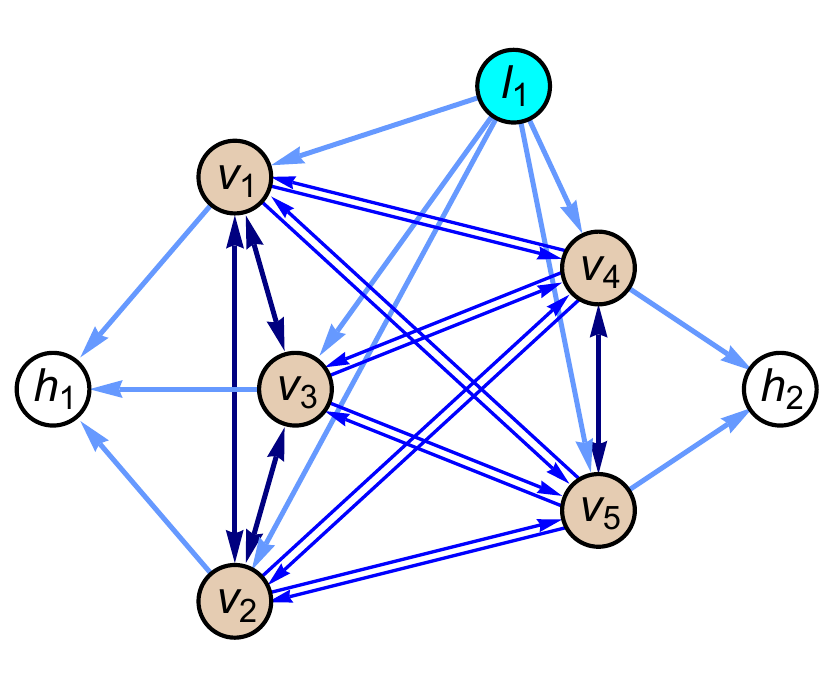}
	\caption{The directed graph $G$ used in our model. Here, we have 5 reference languages $v_1,\dots, v_5$, one classified language $l_1$ and two hypothetical languages: $h_1$ for the cluster $\{v_1,v_2,v_3\}$ and $h_2$ for the cluster $\{v_4,v_5\}$. The two-sided arrows represent pairs of directed edges, where the diffusion coefficients are equal in both directions. On the pairs of edges connecting the two clusters, we have an asymmetric diffusion, since the clusters are of different sizes. There is a one-way diffusion between the classified language and all reference languages and also between each reference language and its corresponding hypothetical language.}
	\label{FigGraphs}
\end{figure}

Now, let
\[ \varphi_i=\left\{ 
	\begin{array}{ll}
		\varphi(v_i), \quad & i=1,\dots n, \vspace{0.1cm} \\
		\varphi(l_{i-n}), \quad & i=n+1,\dots,n+m, \vspace{0.1cm} \\
		\varphi(h_{i-n-m})=\iota_{i-n-m}, \quad & i=n+m+1,\dots, n+m+c.
	\end{array}
	\right.
\] 
Further, let $w_{ij}$ be the diffusion coefficient (weight) corresponding to the $j$-th outgoing edge of $v_i$ and let $\varphi_{i_j}$ be the value of $\varphi$ in the head of that edge. The model (\ref{EqLaplaceFinal})--(\ref{EqGraphLaplaceFinal}) leads to a linear system with unknowns $\varphi_i$, $i=1,\dots,n+m$. From the construction of our final graph follows that any non-boundary vertex has $n$ outgoing edges. Thus, for the $i$-th unknown, we have
\begin{equation}
	-\left(\sum\limits_{j=1}^n w_{ij}\right)\varphi_i +\sum\limits_{j=1}^n w_{ij}\varphi_{i_j}=0.
	\label{EqLinSystem}
\end{equation}
Note that for $i=1,\dots,n$, one of the values $\varphi_{i_j}$ is not an unknown, but it represents the boundary condition and contributes to the right hand side of the system. Also, since $\varphi_i\in \mathbb{R}^c$, the equation (\ref{EqLinSystem}) actually represents $c$ independent linear systems with the same matrix but with different right hand sides.

After solving the system (\ref{EqLinSystem}) for each of the given concepts, we obtain a Euclidean graph in $\mathbb{R}^{pc}$, where $p$ is the number of concepts. This graph reflects the lexical similarity between all languages involved in our experiment and provides a global image of the relationships between them. 

The last thing that remains to show is that the solution of the system (\ref{EqLinSystem}) actually allows an interpretation in terms of probabilities or similarity distributions.

\begin{prop}
	Let $(\varphi_1,\dots,\varphi_{n+m})$ be the solution of the system (\ref{EqLinSystem}) and let $\varphi_i^k$ be the $k$-th component of $\varphi_i$. Then we have
	\begin{enumerate}
		\item{$\varphi_i^k\in [0,1]$,}
		\item{$\sum\limits_{j=1}^c\varphi_i^k=1$.}
	\end{enumerate}
\end{prop}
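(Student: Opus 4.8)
The plan is to exploit the fact that, at every interior vertex, equation~(\ref{EqLinSystem}) expresses $\varphi_i$ as a convex combination of the values at the heads of its outgoing edges. Dividing (\ref{EqLinSystem}) by $W_i:=\sum_{j=1}^n w_{ij}$, which is strictly positive because $g$ is a positive function, yields
\[ \varphi_i=\sum_{j=1}^n\frac{w_{ij}}{W_i}\,\varphi_{i_j}, \]
with coefficients $w_{ij}/W_i>0$ summing to $1$. I would also record the connectivity that drives the argument: following outgoing edges, every interior vertex reaches a boundary vertex. A reference vertex $v_i$ has a direct outgoing edge to its hypothetical language, while a classified vertex $l_i$ has outgoing edges to every $v_j$, each of which reaches the boundary in one further step.

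First I would establish that the system matrix is invertible, so that a unique solution exists. Consider the homogeneous problem (the same identities with all boundary values replaced by $0$) and fix a component $k$. Let $M$ be the maximum of $\varphi_i^k$ over the interior vertices, attained at some vertex $i$. Since $\varphi_i^k=M$ is a convex combination with strictly positive coefficients of values that are all $\le M$, every head value $\varphi_{i_j}^k$ must equal $M$; iterating along outgoing edges and invoking the connectivity property forces a boundary value to equal $M$, so $M=0$. Applying the same reasoning to $-\varphi_i^k$ gives $\min_i\varphi_i^k\ge 0$, hence $\varphi_i^k=0$ for all $i,k$. The homogeneous system therefore has only the trivial solution, the matrix is invertible, and (\ref{EqLinSystem}) admits a unique solution.

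Claim~1 then follows from the identical maximum principle applied to the genuine solution: for each $k$, the boundary values of the $k$-th component are the entries $(\iota_j)^k\in\{0,1\}$, so the propagation argument shows the interior maximum cannot exceed $1$ and the interior minimum cannot fall below $0$, giving $\varphi_i^k\in[0,1]$. For Claim~2 I would sum the convex-combination identity over $k$: writing $s_i=\sum_{k=1}^c\varphi_i^k$, linearity gives $s_i=\sum_{j=1}^n(w_{ij}/W_i)\,s_{i_j}$, the same harmonic relation, now with boundary data $\sum_{k=1}^c(\iota_j)^k=1$ at every boundary vertex. The constant assignment $s_i\equiv 1$ satisfies these equations and matches the boundary, so by the uniqueness just established, $s_i=1$ for every interior $i$.

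The main obstacle is the maximum-principle step, and specifically the propagation argument within it: one must argue carefully that equality in the convex combination forces equality at the head of every outgoing edge, which is exactly where strict positivity of all weights $w_{ij}=g(\lambda(e))/\nu(head(e))>0$ is indispensable, and then chain this equality along a directed path until a boundary vertex is reached. Once this is in place, everything else is routine linear bookkeeping.
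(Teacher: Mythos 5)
Your proof is correct, but it takes a genuinely different route from the paper's. The paper argues geometrically: if some $\varphi_p$, $p\in\{1,\dots,n\}$, were not a convex combination of $\iota_1,\dots,\iota_c$, it would be an extreme point of the convex hull of all $\varphi_i$ and $\iota_j$; choosing a supporting hyperplane meeting that hull only at $\varphi_p$ and projecting (\ref{EqLinSystem}) onto its normal yields a sum of same-signed terms with at least one nonzero term (coming from the edge to the hypothetical language), a contradiction. The classified vertices are then treated as convex combinations of the reference values, and both claims follow at once from the resulting simplex representation. You instead run a discrete maximum principle: normalize (\ref{EqLinSystem}) by $W_i$ to get a convex-combination identity, propagate equality of the maximizer along directed edges, and use the fact that every interior vertex reaches the boundary. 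What your route buys is well-posedness: you actually prove the system matrix is invertible, so that ``the solution'' named in the statement exists and is unique --- a fact the paper tacitly assumes and never verifies; your derivation of Claim~2 (the constant assignment $s_i\equiv 1$ solves the same harmonic relations with boundary data $1$, then invoke uniqueness) is likewise a clean alternative to the paper's direct computation with the coefficients $a_{ij}$. What the paper's route buys is brevity: one geometric argument delivers membership in the simplex, hence both claims simultaneously, without any discussion of solvability. One small wording issue in your homogeneous step: the assertion that all head values are $\le M$ silently assumes $M\ge 0$, since boundary heads carry the value $0$; strictly, the maximum argument shows $M\le 0$ (if $M>0$ the propagation forces a boundary value equal to $M$, a contradiction), the minimum argument you already invoke shows $\min_i\varphi_i^k\ge 0$, and the two together give the trivial solution. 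This is cosmetic and does not affect the correctness of your proof.
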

\begin{proof}
	Since the reference languages are only influenced by each other and the hypothetical languages, we will discuss them separately and start by proving that any $\varphi_i$, $i=1,\dots,n$, is a convex combination of $\iota_1,\dots,\iota_{c}$. Let us assume that this is not true. Then, for some $p\in\{1,\dots,n\}$, the point $\varphi_{p}$ must be a vertex of the convex hull $\mathcal{C}$ of all $\varphi_i$, $i=1,\dots,n$, and $\iota_i$, $i=1,\dots,c$. Let us consider any hyperplane $\Pi$ in $\mathbb{R}^c$ such that $\Pi\cap\mathcal{C}=\{\varphi_{p}\}$ and let $n_{\Pi}$ be a unit normal of $\Pi$. Then (\ref{EqLinSystem}) implies
	\begin{equation}
		\sum\limits_{j=1}^n w_{pj}(\varphi_{p_j}-\varphi_p)\cdot n_{\Pi}=0 .
		\label{EqSumVarphi}
	\end{equation}
	From the definition of $\Pi$ follows that all $\varphi_i$, $i=1,\dots,n$, lie in one of the closed half spaces bounded by $\Pi$. Thus, since all $w_{pj}$, $j=1,\dots,n$, $j\neq p$, are positive, all summands in (\ref{EqSumVarphi}) have the same sign. Further, by assumption, $\varphi_p\neq\iota_{i}$, $i=1,\dots,c$, which means that at least one of the summands is non-zero. This implies that the sum in (\ref{EqSumVarphi}) cannot be zero and we have come to a contradiction.
	
	The above implies that for any $i=1,\dots,n$ 
	\[ \varphi_i=\sum\limits_{j=1}^c a_{ij}\iota_j, \]
	where $\sum\limits_{j=1}^c a_{ij}=1$. Since $\iota_j^k$ is either 0 or 1, we have $\varphi_i^k\in[0,1]$. Further,
	\[ \sum\limits_{k=1}^c\varphi_i^k=\sum\limits_{k=1}^c\sum\limits_{j=1}^c a_{ij}\iota_j^k=\sum\limits_{j=1}^c a_{ij}\sum\limits_{k=1}^c\iota_j^k=\sum\limits_{j=1}^c a_{ij}=1.\]
	
	Finally, for $i=n+1,\dots,n+m$, the equality (\ref{EqLinSystem}) implies that $\varphi_i$ is a convex combination of $\varphi_j$, $j=1,\dots,n$. As such, it is also a convex combination of $\iota_j$, $j=1,\dots,c$, and, similarly as above, we can confirm the claims of our proposition.
\end{proof}

\begin{rem}
Using the graph calculus of Friedman and Tillich \cite{FT}, the equation (\ref{EqLaplace}) can be written as an actual differential equation (Laplace's equation) representing stationary diffusion on a weighted graph. For the oriented graph, this calculus does not apply anymore and our model has to be understood as a discrete and, moreover, non-symmetric analogue of diffusion. However, just as the classical diffusion, this type of diffusion process can be also found in natural phenomena, for example in chemical reaction networks \cite{Muller}. 
\end{rem}

\subsection{The diffusion intensity function}

To complete the model, it remains to specify the diffusion intensity function $g$ used above. In our case, we set
\begin{equation}
	g(x)=\frac{1}{1+(e^{Kx}-1)^2} 
	\label{EqFunctionf}
\end{equation}
where $K$ is a positive constant. This form of $g$ was chosen for two reasons. First, it is a decreasing function converging to zero, which is necessary, if we want similar translations to be attracted more than unrelated or less similar translations. Second, in order to choose from among all functions with this property, we inspected the ratio $g(x)/g(x+1)$. We would like this ratio not to converge to 1, which was the case for several other standard functions that we tested, since we want a higher distance between words to always yield a sufficiently decreased diffusion coefficient. Indeed, we have
\[ \lim\limits_{x\rightarrow\infty}\frac{g(x)}{g(x+1)}=e^{2K},\]
which means that we made an appropriate choice for our situation.


\section{Measuring the distance between translations}

\subsection{Distance between individual words}

As we could see in the previous section, the mathematical model can employ any function that measures the distance between two translations or two individual words. The method for individual words that we used measures the distance between phonetic transcriptions of the words and it is based on the Damerau-Levenshtein (DL) distance \cite{Dam} and basic properties of phonemes.  

The Levenshtein distance \cite{Lev} and its variants are a classic in the field of computational linguistics. In its basic form, it measures the minimum number of edits that transform the first string into the second or vice versa. The elementary edits are insertion, deletion and substitution. In addition to these three operations, the DL modification considers transposition as an elementary edit as well. Since transpositions are often observed when comparing equivalent terms from two different languages, we chose to use this variant. 

In its basic setting, the DL distance assigns an equal (unit) weight to all elementary edits. This can create unrealistic results especially when it comes to phoneme substitution. For example, the edit distance between the English word `red' \textipa{[\*rEd]} and the Danish word `r\o d' \textipa{[K\oe D]} is 3, since all three characters in their phonetic transcriptions are different, even though the words are cognates and obviously similar. The same edit distance is obtained between `red' and the completely unrelated Welsh equivalent `coch' \textipa{[ko:X]} (taking `\textipa{o:}' as one character). We address this issue by assigning substitution a weight based on the distance between the substituted phonemes. The rest of the elementary edits keep their original unit weight. 

Let $\delta$ denote the function of two string variables obtained by the above described modification of the DL distance. Contrarily to the original DL distance, $\delta$ need not be a metric on the set of strings. The reason is that the triangle inequality is not satisfied, if the substitution weight is allowed to be greater than 2. However, it is always a semi-metric, which is a common tool for evaluating similarity between data of various types (for examples, see e.g. \cite{Conci} or \cite{James}).

As another modification, we separated consonants and vowels and evaluated $\delta$ for the consonant and vowel substring separately. Consonants and vowels were found to have somewhat different roles in a language  and consonants are usually the ones carrying more lexical information \cite{Nespor}. Also, they tend to be more stable as a language undergoes sound changes and vowels are typically more prone to changing. As a typical example, let us take the English word `many' \textipa{[mEni]}. Some of its Germanic cognates are Scots `mony' \textipa{[m6nI]}, Danish `mange' \textipa{[mAN@]} or Swedish `m\aa nga' \textipa{[mONa]}. As we can see, none of the phonetic transcriptions has a matching vowel with any other one, while there is only a small variation in consonants. This is a common observation across probably most language branches, at least in the region that we were dealing with in our experiments. So, if we have two strings $p$ and $q$, we first form their consonant and vowel substrings $p_{con}$, $p_{vow}$, $q_{con}$, $q_{vow}$ and then we compose the distance measuring function $d_0$ as
\begin{equation}
	d_0(p,q)=w_{con}\delta(p_{con},q_{con})+w_{vow}\delta(p_{vow},q_{vow}). 
	\label{Eqd0}
\end{equation}
The coefficients $w_{con}$ and $w_{vow}$ are the consonant and vowel weights and we will speak about setting their values later.

We have found a similar approach to ours in the interesting and elaborated PhD. thesis of Heeringa \cite{Heer}. Here, the author does not consider transpositions and assigns different weights to all insertion, deletion and substitution. He also allows certain consonant-vowel substitutions.

\subsection{Dealing with synonyms}\label{SecDealingWithSynonyms}

When trying to properly classify or cluster languages, synonyms have to be taken into account. Let us have two languages and a given concept, for which the first language has a $k$-element translation (vector of synonyms) $p=\{p_1,\dots,p_k\}$ and the second one has an $l$-element translation $q=\{q_1,\dots,q_l\}$. The overall distance $d$ between $p$ and $q$ is evaluated according to the type of languages that we are dealing with.
\begin{itemize}
	\item{If both languages are reference languages, we compute $d$ as
	\begin{equation}
		d(p,q)=\frac{\sum\limits_{i=1}^k \min\limits_{j=1\dots l} d_0(p_i,q_j)+\sum\limits_{j=1}^l \min\limits_{i=1\dots k} d_0(p_i,q_j)}{k+l},
		\label{EqDistG}
	\end{equation}
	where $d_0$ is computed according to (\ref{Eqd0}).  }
	\item{If only one of the languages is a reference language, we set
	\begin{equation}
		d(p,q)=\min\limits_{i=1\dots k, j=1\dots l} d_0(p_i,q_j).
		\label{EqDistBarG}
	\end{equation}}
	This includes the situation when one of the languages is hypothetical. According to our interpretation of a hypothetical language, in that case we get $d(p,q)=0$.
\end{itemize}

The reason for using two different approaches is the following. For a classified language, we just need to evaluate its lexical similarity with the reference languages. This means that it is sufficient to find a term in each language that resembles its own term(s) the most. A hypothetical language is meant to strongly attract each language from the corresponding cluster and the attraction should be of the same intensity across all clusters. This naturally leads to setting $d$ to zero. In case of two reference languages, the situation is not so simple, since their evaluated lexical similarity affects the final classification of the classified languages. Now, let us assume that the first reference language has two synonyms $p_1$ and $p_2$ for a given concept. The second reference language from another cluster has an only term $q=p_1$. If they are the only reference languages in the experiment and if we use the distance (\ref{EqDistBarG}), they will end up at the same position in the resulting Euclidean graph. Let us further suppose that the classified language has a single term $\bar{q}=p_2$. For that reason, it will end up at the exact same position as the two reference languages, while it has nothing in common with the second reference language. This shows that taking a simple minimum (\ref{EqDistBarG}) could lead to a strongly unrealistic situation. The formula \ref{EqDistG} softens this effect and expresses that the reference languages are similar, but  not completely identical. Experiments have also shown that the results obtained in this way are more realistic.

\subsection{The phoneme distances}

In order to obtain a suitable substitution weight, we need to measure the distance between phonemes. Since we treat consonants and vowels separately, we will describe both consonant and vowel distances. 

Basically, there are two main approaches for evaluating the phoneme distance: the acoustic method and the feature method. The acoustic method is based on analyzing spectrograms corresponding to phonemes (first presented by Potter \cite{Potter}) and using different techniques to measure the distance between them (see e.g. \cite{Heer}, \cite{Vak}). The feature method uses the information about how a specific sound is produced in the human airway system. For our purposes, we chose this approach. 

The feature space used in the feature method is usually quite high-dimensional. For example, Rubehn et al. \cite{Rubehn} use a 39-dimensional space of ternary features. Hoppenbrouwers and Hoppenbrouwers \cite{Hopp} use 11 features for consonants and 10 features for vowels, while Heeringa \cite{Heer} considers 12 consonant features and 15 vowel features. A different 12-dimensional consonant feature space is used by Vakulenko \cite{Vak}. A high amount of features is usually useful in dialectology where the phoneme differences can be subtle and classification could be difficult without a fine resolution. 

In our work, we do not try to classify dialects but languages, so we can afford omitting some features that do not have a significant effect on the classification. We also did not include some features, for example the pulmonic and the click consonant feature, that do not play any role in the languages that we used. If needed for experiments with other languages, they can be easily added. Our final selection includes 6 consonant features and 5 vowel features. Let us now describe them more in detail.

First, we will discuss the consonants. To assemble our set of features and assign coordinates to each consonant, we used the extended consonant chart by Esling \cite{Esling}. However, we made some adjustments that will be explained below. The features that we included are:
\begin{enumerate}
	\item{{\em Articulation zone}. This is an integer value ranging from 0 to 11, corresponding to the 12 articulation zones: bilabial, labio-dental, linguo-labial, dental, alveolar, post-alveolar, retroflex, palatal, velar, uvular, pharyngeal, glottal.}
	\item{{\em Air flow type}. This parameter ranges from 0 to 9 and expresses the character of the air flow when pronouncing the consonant. Its values are: nasal (0), plosive (4), affricate (5), fricative (6), approximant (7), tap/flap (8) and trill (9).}
	\item{{\em Voicing}. The consonant can be either voiced or voiceless, so this value can be either 0 or 1.}
	\item{{\em Lateral feature}. Also 0 or 1, expressing whether the air flow is central or lateral.}
	\item{{\em Sibilant feature}. Again, 0 or 1, depending on whether the consonant is sibilant or not.}
	\item{{\em Coarticulated feature.} Coarticulated pronunciation employs two articulation zones. Thus, the first coordinate of a coarticulated consonant is set to the arithmetic mean of the coordinates representing the two articulation zones. Moreover, the value of the coarticulated feature is set to 1 (instead of 0 representing simple consonants).}  
\end{enumerate}

\begin{figure}[h]
	\centering
	\includegraphics[width=0.95\textwidth]{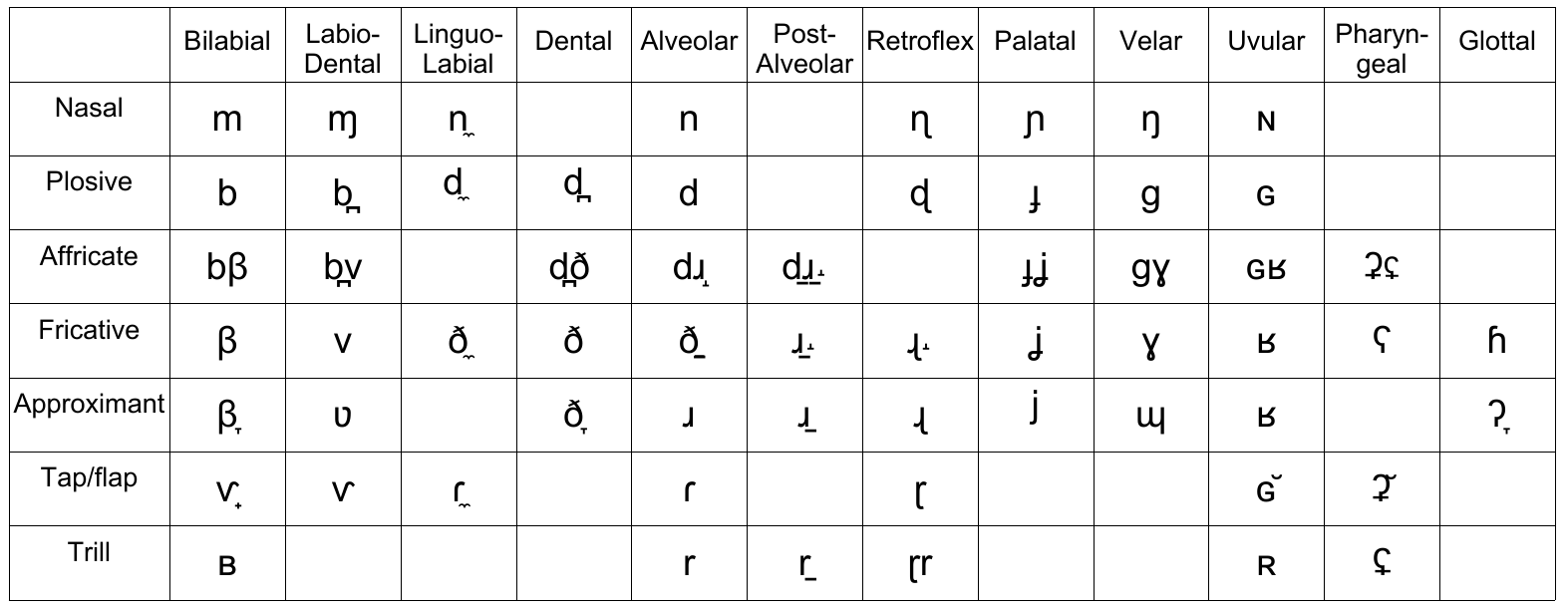} 
	\caption{Voiced, non-lateral, non-sibilant and non-coarticulated consonants represented in our model.}
	\label{FigConsonants}
\end{figure}

Using this six-dimensional coordinate system, we represented $168$ consonants. An illustration showing a 2D slice is provided in Figure \ref{FigConsonants}. Compared to Esling \cite{Esling}, the labio-dental and dento-labial consonants are not distinguished. Further, what needs to be cleared out the most, are the coordinates that we assigned to the various types of air flow. Esling and the official IPA chart \cite{IPA} list these types in a different order. To provide a reasoning behind our setting, let us first explain what the air flow types mean. If a consonant is nasal, it means the air flows out through the nose. For a plosive, the flow is stopped on the way out. A fricative is obtained by narrowing the air channel, which creates a turbulent flow. An affricate is a sound that starts as a plosive and releases as a fricative. An approximant is formed similarly to a fricative, but the air channel is not narrowed enough to create turbulence. For a tap/flap, there is a brief contact between the articulators. A trill is produced when the contact is longer and the articulator vibrates. 

This description intuitively suggests some possible orderings of the air flow types, but to obtain a more decisive conclusion, we employed another criterion. What the coordinates should actually do is to yield smaller distances for consonants that are frequently interchanged due to sound change and larger values for consonants that are improbable to substitute one another. It is not easy to find a relevant and transparent database of sound changes, however, there is a simple table of basic sound correspondences between Proto-Indo-European (PIE) and major Indo-European language groups by Mallory and Adams \cite[p. 464]{MA}. Since we worked mostly with Indo-European languages, we used the information from this table. According to it, the PIE nasal consonants did not undergo any systemic sound change and also no other PIE consonants were substituted by nasals. This fact and the particular character of the nasal air flow made us place nasals further from the other consonants. The order of plosives, affricates, fricatives and approximants comes naturally from their definition and there are also various corresponding sound changes in this group of consonants. As for the trills, the table claims only trill-approximant subtitutions (*l$\rightarrow$r, *r$\rightarrow$l), so we put them at the end of the list. Taps/flaps -- a softer version of trills -- were placed between approximants and trills.

For vowels, we used the classical IPA trapezoid \cite{IPA} (Figure \ref{FigVowels}). Since we set the distance between the neighboring consonants in the chart equal to 1 in most cases, we proceeded similarly also in case of vowels. Thus, the height of the trapezoid was set to 2 as well as its width in the middle. Contrarily to consonants, vowels change more continuously and their coordinates need not be integer values. In total, we represented 64 vowels using the following features:
\begin{enumerate}
	\item{{\em Articulation zone}. For vowels, there are three basic articulation zones: front, central and back. The values are from the interval $\left[-\frac{5}{3},1\right]$. In the middle of the trapezoid's height, the three main articulation zones are represented by values -1, 0 and 1 (see Figure \ref{FigVowels}).}
	\item{{\em Openness}. The range is $\left[-1,1\right]$ and the basic values are open (-1), mid (0) and close (1). }
	\item{{\em Roundness}. This parameter says whether the mouth is rounded or not, so its value is either 0 or 1.}
	\item{{\em Length}. We used only two values for this parameter: 0 (short) and 1 (long).}
	\item{{\em Nasal feature}. This is again 0 or 1, determined by whether the vowel is nasal or not.}
\end{enumerate}

\begin{figure}[h]
	\centering
	\includegraphics[width=0.35\textwidth]{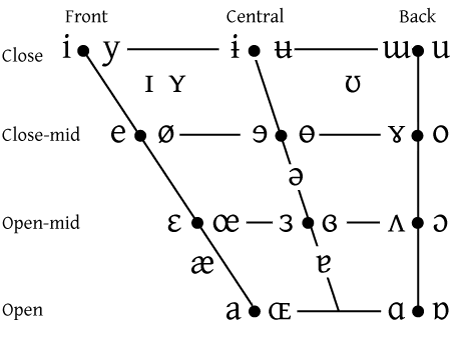} \hspace{1cm}
	\includegraphics[width=0.35\textwidth]{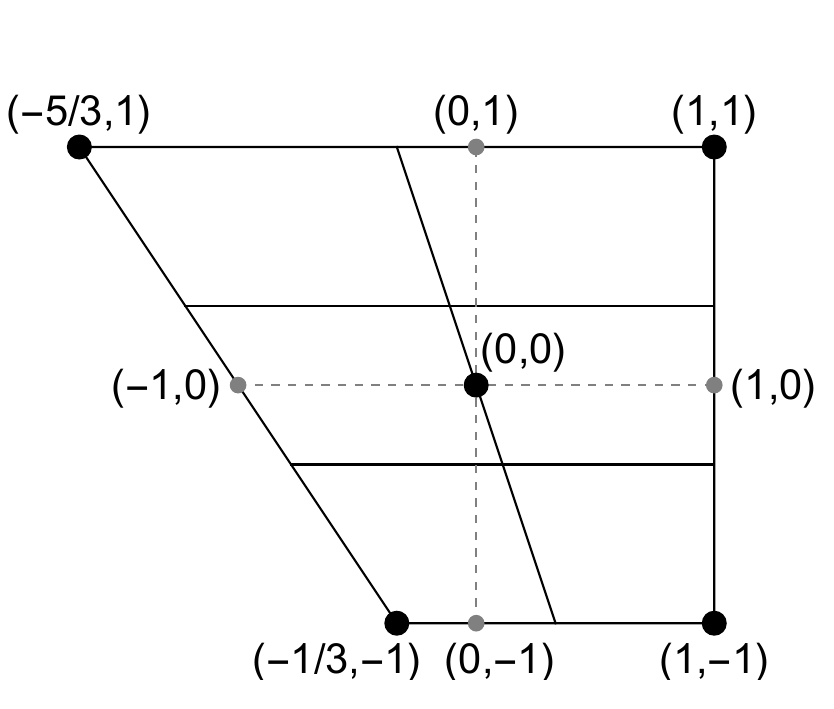} 
	\caption{The vowel chart created by International Phonetic Association \cite{IPA}. Most vowels are listed in pairs, where the one on the right represents the round version of the sound. In our model, the first two coordinates of a vowel are set so that the vowel `\textipa{@}' is placed at the origin and the coordinate axes intersect the trapezoid at $(-1,0)$, $(1,0)$, $(0,-1)$ and $(0,1)$.} 
	\label{FigVowels}
\end{figure}

Now, let $f_{con}^i$ be the $i$-th consonant feature, $i=1,\dots 6$, and let $x$, $y$ be two consonants. We define the consonant distance $\rho_{con}$ and the substitution weight $\sigma_{con}$ as
\begin{equation}
	\rho_{con}(x,y)=\sum\limits_{i=1}^6 |f_{con}^i(x)-f_{con}^i(y)|, \quad \sigma_{con}(x,y)=a_{con}\rho_{con}(x,y), 
	\label{EqConsonantDist}
\end{equation}
where $a_{con}$ is a constant from the interval $[0,1]$. 
Similarly, if $f_{vow}^i$ is the $i$-th vowel feature, $i=1,\dots,5$, and $x$, $y$ are two vowels, the vowel distance $\rho_{vow}$ and the corresponding substitution weight $\sigma_{vow}$ are defined as
\begin{equation}
	\rho_{vow}(x,y)=\sum\limits_{i=1}^5 |f_{vow}^i(x)-f_{vow}^i(y)|, \quad \sigma_{vow}(x,y)=a_{vow}\rho_{vow}(x,y). 
	\label{EqVowelDist}
\end{equation}
 
\section{Experiments}

In this section, we present results of various experiments that we performed in order to set the model parameters and also to analyze some to real-world data. The language data that we used are available for viewing, downloading and using at Zenodo (\url{https://doi.org/10.5281/zenodo.15168042}).
 
\subsection{Setting the model parameters}
 
Summarizing the equations (\ref{EqFunctionf}), (\ref{Eqd0}), (\ref{EqConsonantDist}) and (\ref{EqVowelDist}), we can see that our model uses several parameters. We will now explain how exactly we set them.

\subsubsection{Scaling parameters for the substitution weight} 

First, let us explain the choice of the substitution weight scaling parameters $a_{con}$ and $a_{vow}$. Since the other elementary edits have all weights equal to 1, we want the substitution weights to be less than 1 for very similar phonemes and significantly greater than 1 for very distant phonemes. Also, since vowels are changed more easily than consonants, we want the maximum vowel substitution weight to be lower than the maximum consonant substitution weight. The minimum consonant distance in our consonant representation is 1, while the maximum is 20. For vowels, the minimum distance is $0.25$ and the maximum is $\frac{23}{3}$. If we used $a_{con}=1$ and $a_{vow}=1$, the minimum and maximum substitution weights would be equal to the minimum and maximum distances stated above. However, such values do not quite meet our criteria, especially for consonants, since they are too high and no consonant substitution would have a weight less than 1. On the other hand, we can see that if we chose $a_{vow}\leq\frac{3}{23}$, no vowel substitution would have a weight greater than 1. Based on these facts and after various tests, we chose $a_{con}=a_{vow}=0.3$. This yields consonant substitution weights from the interval $\left[0.3,6\right]$ and vowel substitution weights from $\left[0.075,2.3\right]$.

\subsubsection{Consonant and vowel weights}

The next parameters to determine are the consonant weight $w_{con}$ and the vowel weight $w_{vow}$ in the definition of $d_0$ (\ref{Eqd0}). Intuitively, one would expect $w_{vow}$ to be lower than $w_{con}$, however, the optimal values are not obvious. In order to find them, we decided to use an optimization method.

Let us consider several reference language clusters and a classified language that belongs to one of them. In order to find an optimal way of measuring the distance between words, we need to find the parameter values that identify the correct cluster with the highest probability possible. With respect to this purpose, taking in account the similarities between the reference languages might create a situation that is a little too complex. Thus, in this step of tuning our algorithm, we used a simplified classification procedure that only needs to know the cluster where each reference language belongs. This was the equation (\ref{EqLaplaceFinal}), but with an additional boundary condition $\varphi({v}_i)=\iota_{c_i}$, $i=1,\dots,n$, where $c_i$ is the index of the reference cluster that contains the language represented by $v_i$. We used one classified language and three reference clusters: Slavic, Romance and Germanic. Each of them contained 5 languages which were Czech, Polish, Russian, Serbian, Slovak, Catalan, French, Italian, Portuguese, Spanish, Dutch, German, English, Norwegian and Swedish.

We performed the optimization for a total of 16 classified languages, one language at a time. We used only languages that belonged to one of the three clusters, namely: Belarusian, Bulgarian, Lower Sorbian, Macedonian, Slovenian, Ukrainian, Upper Sorbian, Latin, Occitan, Romanian, Romansh, Afrikaans, Danish, Faroese, Frisian (West) and Icelandic. For each language, we used 12 concepts and only one term per concept. We chose the concepts from the Swadesh list with two conditions: maximum two cognate groups per concept in each reference language cluster (to avoid too complex situations) and a notable presence of sound changes across languages (for the algorithm to be challenged). The concepts that we selected were: to bite, blood, to blow, breast, earth, five, head, to live, mouth, sky, tongue, wing.

For a given language and the $i$-th concept, let $\Phi_i(w_1,w_2)$ be the solution of our simplified diffusion equation with $w_{con}=w_1$ and $w_{vow}=w_2$. For each classified language, we solved the optimization problem
\[ \min\limits_{(w_1,w_2)\in[0,1]^{2}}\left\lVert\frac{1}{12}\sum\limits_{i=1}^{12}\Phi_i\left(w_1,w_2\right)-\iota_{k}\right\rVert. \]
The method that we used for optimization was SOMA \cite{Zel} with the population of 30 and 5 iterations. We performed the optimization for 6 different values of $K$: 0.5, 0.6, 0.7, 0.8, 0.9, 1.0. The final values of parameters were obtained by averaging the results from all 96 runs of SOMA. They were (rounded to one decimal place): $w_{con}=1.0$, $w_{vow}=0.7$. These are the values that we used in all experiments presented further in this section.

\subsubsection{The edge weight parameter $\nu_0$}

The definition of the function $\nu$ in Section \ref{SecMathematicalModel} includes a positive parameter $\nu_0$ that is used to set the weights of the edges connecting the reference languages with their corresponding hypothetical languages. This parameter regulates the influence of the boundary condition on the resulting positions of the reference languages. We set its value according to the simple interpretation following from the equalities (\ref{EqSolutionNu0}): it determines how far apart will two clusters with identical translations be from each other in the resulting Euclidean graph. For $\nu_0\rightarrow 0$, the position of each cluster will converge to the position of its hypothetical language. For $\nu_0\rightarrow\infty$, the clusters will approach the arithmetic mean of $\iota_1$ and $\iota_2$. 

An appropriate value of $\nu_0$ is a value that places clusters with identical translations close enough to each other but it does not annihilate the influence of the boundary condition. Our condition for the distance of the two clusters was
\[ \Vert \varphi_1-\varphi_2\Vert \leq \frac{1}{10}\Vert \iota_1-\iota_2\Vert. \]
Combining this condition with (\ref{EqSolutionNu0}), we get $\nu_0\geq \frac{9}{2}$. 
Based on this inequality, we set $\nu_0=5$ for all of our experiments. As we will see in the presented results, this value matches our criteria -- it brings together very similar translations but keeps apart those that are not similar. 

\subsubsection{The diffusion parameter $K$}

Finally, the question is how to set the parameter $K$ that determines the diffusion coefficients. In the first round of the tests, we wanted to find out which values of $K$ sufficiently correctly classify a given word. We used the model (\ref{EqLaplaceFinal}) -- (\ref{EqGraphLaplaceFinal}) and tried to classify 25 words that were from 11 Indo-European languages. The reference languages were the same as in the previous experiment and we chose words that clearly belong to only one of the clusters.  However, in order to properly examine which values of $K$ work, the words were of various level of difficulty -- variously distant from their cognates in the cluster. In the easiest group, there was, for example, the Afrikaans word `bloed' \textipa{[blut]} meaning `blood', which is very similar or identical with all its Germanic equivalents (Dutch `bloed' \textipa{[blut]}, English `blood' \textipa{[bl2d]}, German `Blut' \textipa{[blut]}, Norwegian `blod' \textipa{[blu:]}, Swedish `blod' \textipa{[blu:d]}). Other terms for `blood' were also in the medium difficulty group, for example the Icelandic `bl\' o\textipa{D}' \textipa{[plou:D]}   or Romanian `s\^ange' \textipa{[s1ndZe]}(compare to Catalan `sang' \textipa{[saN]}, French `sang' \textipa{[s\~Ak]}, Italian `sangue' \textipa{[sangwe]}, Portuguese `sangue' \textipa{[s\~5gwi]}, Spanish `sangre' \textipa{[saNgRe]}). For the medium and the high difficulty group, we used also some languages that do not belong to any of the reference clusters, but their words do have a cognate in one of the clusters. Such words are often quite different from their cognates, though still recognizable, so we were curious how our algorithm would deal with that. An example of such a word is Albanian `harabel' \textipa{[haRabEl]}, meaning `sparrow', cognate with Czech `vrabec' \textipa{[vrabEts]}, Polish `wr\' obel' \textipa{[vrubEl]}, Russian `\foreignlanguage{russian}{воробей}' \textipa{[v9r5bej]}, Serbian `\foreignlanguage{russian}{врабац}' \textipa{[Vrabats]} and Slovak `vrabec' \textipa{[vrabets]}. Another example is Lithuanian `žvaigždė' \textipa{[ZV5jgZde:]}, meaning `star', cognate with Czech `hv\v ezda' \textipa{[Hvjezda]}, Polish `gwiazda' \textipa{[gvjazda]}, Russian `\foreignlanguage{russian}{звезда}' \textipa{[zvIzda]}, Serbian `\foreignlanguage{russian}{звезда}' \textipa{[zVe:zda]} and Slovak `hviezda' \textipa{[hvjezda]}. 

Our criterion for a successful classification was as least 80\% similarity with the correct reference cluster. The lowest value of $K$ for which this was achieved varied between 0.25 and 0.75, with the median equal to 0.45. With increasing $K$, the probability was increasing as well. 

At this point, we have identified the parameter values that reliably detect similarities, which is important for our overall procedure of lexical similarity evaluation. However, in our application, we are not necessarily interested in the highest probability classification. In real world situations, the reference languages are never strictly separated and many times, we find words similar to the classified word in more than just one cluster. In such cases, we rather want a classification that also reflects the relations between the reference languages. As we will show, the parameter $K$ is the one that is used to tune the algorithm for exactly this purpose. Let us now explore how it influences the process and which values are the most suitable for capturing the mutual similarities. 

For testing, we chose the concept `flower'. The corresponding terms are listed in Table \ref{TabFlower}. As we can see, there is one outlier among the Germanic words -- the English word `flower'. This comes from Latin `fl\=os' (meaning `flower') and it is a close cognate with all Romance terms shown in the table. Moreover, to make the situation a little more intricate, all Romance and Germanic terms are considered to be derived from one Proto-Indo-European root `\textipa{*b\super{h}leh\textsubscript{3}}' and there is actually some non-negligible phonetic similarity among all of them. The Slavic terms have a completely different origin.

\begin{table}
	\centering
	\begin{tabular}{l*{3}{l}}
		\\
		Slavic & Romance & Gemanic  \\
		\hline \vspace{-0.1cm}\\
		kv\v et \textipa{[kvjEt]} (Czech)	 & flor \textipa{[flO]} (Catalan)  & bloem \textipa{[blum]} (Dutch)   \\
		kwiat \textipa{[kvjat]} (Polish)	 & fleur \textipa{[fl\oe K]} (French) & flower \textipa{[flaU\*r]} (English)  \\
		\foreignlanguage{russian}{цветок} \textipa{[cvItok]} (Russian) & fiore \textipa{[fjore]} (Italian) & Blume \textipa{[blume]} (German)   \\
		\foreignlanguage{russian}{цвет} \textipa{[cvEt]} (Serbian) & flor \textipa{[floR]} (Portuguese) & blomst \textipa{[blOmst]} (Norwegian) \\
		kvet \textipa{[kvet]} (Slovak) & flor \textipa{[floR]} (Spanish) & blomma \textipa{[blUma]} (Swedish) \
	\end{tabular}
	\caption{The terms for the concept `flower' in 15 reference languages.\\}
	\label{TabFlower}
\end{table}

For classification, we used the Scots term `flour' \textipa{[flu:r]}, the only other Germanic term resembling `flower'. In Table \ref{TabFlowerClass}, we can see its similarity distribution for various values of $K$. As we can see, for $K=3.0$, the algorithm is already very categorical and basically says only which cluster is the most similar to the Scots word. For smaller values of $K$, we get more realistic and structured results. This can be observed also in Figure \ref{FigFlowerClass}, where we show results for three different values of $K$. As the definition of the function $g$ (\ref{EqFunctionf}) suggests, the value of $K$ determines how distant two words can be to still have a non-negligible impact on each other. Thus, for a low $K$ ($K=0.4$), we can see that all pairs of words are visibly attracted to each other (there is at least a little deviation from their hypothetical language). For $K=0.6$, we can still observe some attraction between the Germanic and the Romance words, which reflects their common Proto-Indo-European root. The Romance words are, in addition, quite strongly mutually attracted with the English `flower', which reflects their common Latin origin. The Slavic languages that are unrelated stay very near their hypothetical language.  For $K=1.0$, we can observe that the attraction between the Germanic and the Romance words is practically not present anymore, except for the word `flower', which is weakly attracted. Based on all observation discussed in this part and many similar tests that we performed, we recommend to choose the value of $K$ from the interval $[0.5,0.8]$. 

\begin{table}
	\centering
	\begin{tabular}{l*{8}{c}}
		\\
		& K=0.4 & K=0.5 & K=0.6 & K=0.7 & K=0.8 & K=1.0 & K=$3.0$ \\
		\hline \vspace{-0.1cm}\\
		Slavic & 0.040 & 0.013 & 0.004 & 0.001 & 0.0 & 0.0  & 0.0 \\
		Romance & 0.729 & 0.802 & 0.847 & 0.881 & 0.909 & 0.949 & 1.0 \\
		Germanic & 0.231 & 0.185 & 0.149 & 0.118 & 0.091 & 0.051 & 0.0 \\
	\end{tabular}
	\caption{The similarity distribution for the Scots term `flour' (flower) for various values of $K$.}
	\label{TabFlowerClass}
\end{table}

\begin{figure}[h]
	\centering
	\includegraphics[width=0.3\textwidth]{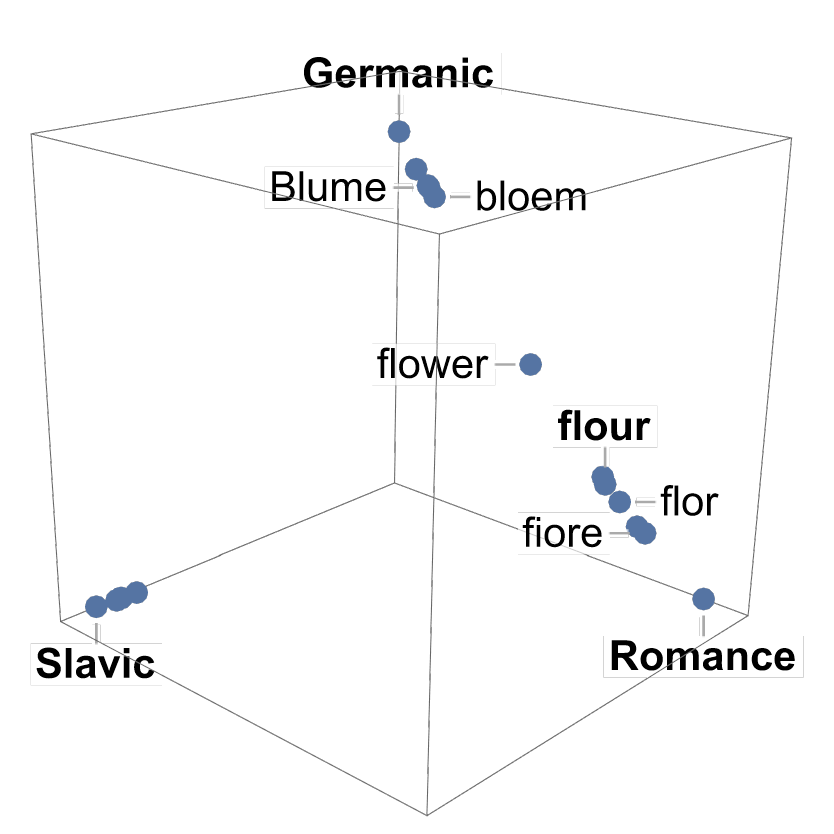} 
	\includegraphics[width=0.3\textwidth]{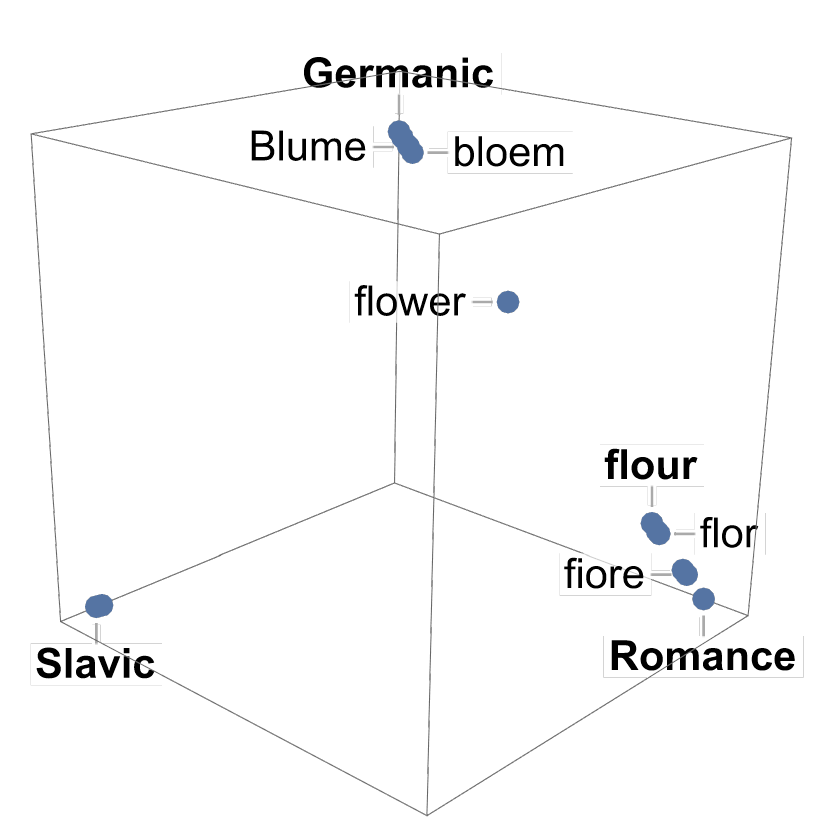} 
	\includegraphics[width=0.3\textwidth]{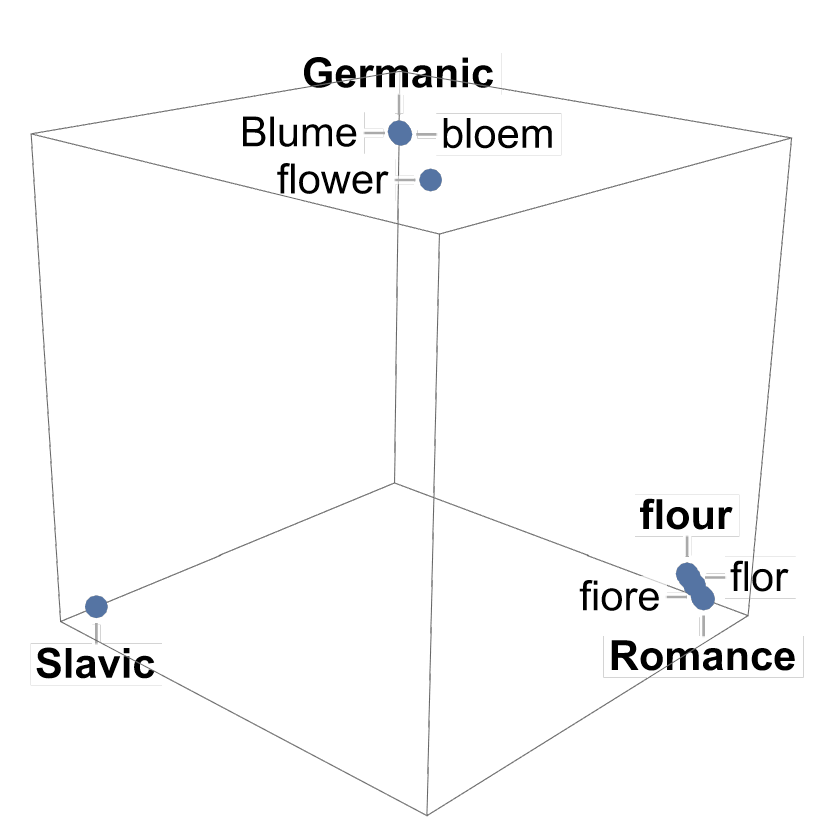} 
	\caption{The classification of the Scots word `flour' for different values of the diffusion parameter $K$. The points `Slavic', `Romance' and `Germanic' represent the hypothetical languages. On the left, we used the value $K=0.4$. We can observe that all words are, to some extent, mutually attracted (they diverted at least a little from their hypothetical language). For $K=0.6$ (middle), we can see that Germanic words are still slightly attracted to their Romance counterparts, which reflects their distant common origin. The only exception is the word `flower' which exhibits a quite strong mutual attraction with its Romance cognates. In the third picture, we set $K=1.0$. This value is already a little too high, since the result only weakly reflects the relationships between the words in the experiment. }
	\label{FigFlowerClass}
\end{figure}

\subsection{A case study}\label{SecCaseStudy}

Having set our model, we can move on to inspecting some real language vocabularies. For our tests, we decided to explore the vocabulary related to traditional tools. This is a part of language where foreign influences are very common and some languages use borrowings from several different language groups.

In order to perform the experiments, we created a list of 42 words containing traditional hand tools, agricultural and outdoor tools, binding tools and the basic shed equipment. The list is presented in Table \ref{TabTools}.

\begin{table}
	\vspace{0.4cm}
	\centering
	\begin{tabular}{*{7}{l}}  
	\multicolumn{2}{c}{Hand tools} & \multicolumn{2}{c}{Agriculture \& outdoors} & Binding & \multicolumn{2}{c}{Shed equipment} \vspace{0.1cm}  \\ 
	\toprule \vspace{-0.2cm}\\
	anvil & plane & adze & rake & belt/strap & bucket & hook \\
	chisel & pliers & axe & scythe & chain & basket & latch \\
	file & saw & flail & shovel & rope & chest & shed \\
	gimlet & staple & hoe & sickle & & crate & shelf \\
	hammer & vise & pickaxe & spade & & cupboard & table \\
	knife & wedge & pitchfork & stick & & & \\
	nail & whetstone & plow & yoke & & & \\
	& & plowshare & & & &  \\
	\end{tabular}
	\caption{The list of 42 concepts used in our case study. \\}
	\label{TabTools}
\end{table}

The languages that we chose to explore were eight European languages from different parts of the continent: Albanian, Aromanian, Breton, Estonian, Latvian, Lithuanian, Maltese and Romanian. All these languages are spoken in linguistically diverse territories and were also subject to various influences in the history. For lexical similarity evaluation, we assembled several reference clusters:
\begin{enumerate}
	\item{Balkans-related (Greek, Hungarian, Turkish),}
	\item{Baltic (Latvian, Lithuanian),}
	\item{Celtic (Cornish, Welsh),}
	\item{Germanic (Danish, Dutch, English, German, Norwegian, Swedish),}
	\item{Romance (Catalan, French, Italian, Latin, Spanish, Portuguese),}
	\item{Slavic I (Bulgarian, Macedonian, Russian, Serbian, Slovak, Ukrainian),}
	\item{Slavic II (Belarusian, Polish, Russian, Serbian, Slovak, Ukrainian).}
\end{enumerate}
In some experiments, we also used one-language clusters, which have the same name as the language that they contain.

In all experiments included in our case study, we used $K=0.6$ and we allowed synonyms. The translations of the concepts were obtained using numerous online dictionaries and cross-checked, if possible. Since many of the tools have dedicated Wikipedia pages in various languages, we also used those. The phonetic transcriptions were obtained by means of the  OpenL translator \cite{OpenL}. However, for some languages, significant manual corrections were necessary. These were based on the phonology and pronunciation rules of each language. As for the system of linear equations (\ref{EqLinSystem}), we solved it by LU-decomposition as implemented in the Eigen library \cite{Eigen}.

Contrarily to the test examples presented in the previous parts, the real world relationships between languages bring along reference clusters of various sizes. As we explained in Section \ref{SecMathematicalModel}, our model (\ref{EqLaplaceFinal}) -- (\ref{EqGraphLaplaceFinal}) should be able to handle such situations and not to unfairly incline to reference clusters of a bigger size. To test our approach, we performed various experiments with words that have cognates in several differently sized clusters. Here, we present two examples. 

In the first example, we classified the Estonian word `viil' \textipa{[vi:l]}, meaning `file' (the tool). The reference clusters were Slavic II, Germanic and Finnish. The Estonian word is cognate and phonetically close to the Finnish term `viila' \textipa{[Vi:la]}, but also, more or less, to all six Germanic terms: Danish and Swedish `fil' \textipa{[fi:l]}, Dutch `vijl' \textipa{[vEil]}, English `file' \textipa{[faIl]}, German `Feile' \textipa{[faIl@]} and Norwegian `fil' \textipa{[fil]}. The Slavic terms are etymologically related but phonetically much more distant. In the first test, we tried the basic model (\ref{EqLaplace}) -- (\ref{EqGraphLaplace}). The resulting feature coordinates for Estonian were $(0.03,0.82,0.15)$, which shows that the similarity with Finnish was unrealistically shadowed by the Germanic cluster. Using the model (\ref{EqLaplaceFinal}) -- (\ref{EqGraphLaplaceFinal}), we obtain the feature coordinates $(0.01,0.46,0.53)$, which  reflects the situation much more truthfully. In Figure \ref{FigClustersSize1}, we can see the resulting placement of all words from both experiments in the feature space.

\begin{figure}[h]
	\centering
	\includegraphics[width=0.4\textwidth]{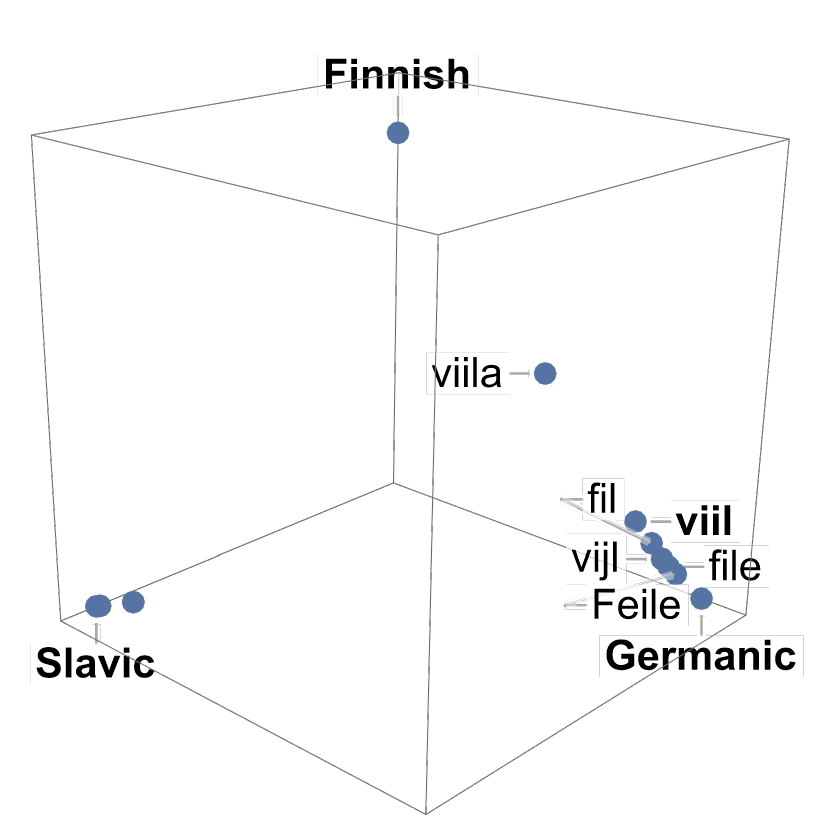} 
	\includegraphics[width=0.4\textwidth]{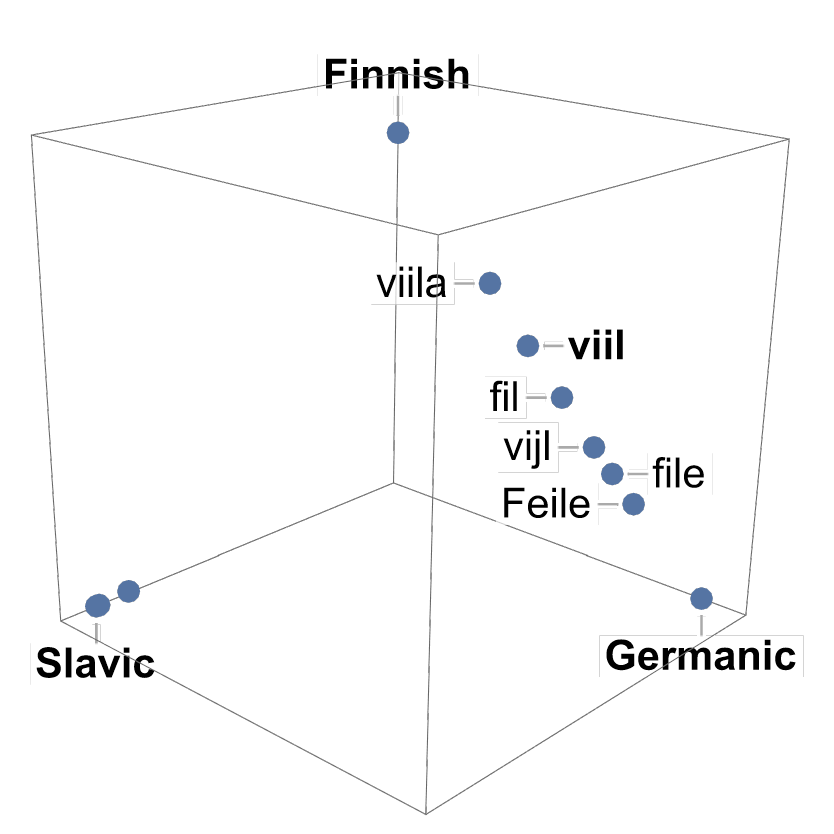} 
	\caption{Classification of words that have phonetically similar counterparts in several reference clusters of varying size: the Estonian `viil' (file). The basic model (\ref{EqLaplace}) -- (\ref{EqGraphLaplace}) causes an unrealistic dominance of the Germanic cluster (left). The directed graph model (\ref{EqLaplaceFinal}) -- (\ref{EqGraphLaplaceFinal}) fairly evaluated the match with each cluster and did not prioritize clusters of a bigger size (right).}
	\label{FigClustersSize1}
\end{figure}

In the second example, we classified the Romanian word `coas\v a' \textipa{[koas@]}, meaning `scythe'. The reference clusters were Slavic I, Balkans-related and Albanian. This time, the Romanian word had a very similar counterpart in all clusters: Bulgarian,  Macedonian, Russian, Serbian and Ukrainian `\foreignlanguage{russian}{коса}', pronounced \textipa{[kOs5]}, \textipa{[kosa]}, \textipa{[k5sa]}, \textipa{[kOsa]} and \textipa{[kOsA]}, Slovak `kosa' \textipa[{kosa}], Greek $\kappa\acute{o}\sigma\mkern-2mu\alpha$ \textipa{[kosa]}, Hungarian `kasza' \textipa{[k6s6]} and Albanian `kos\"e' \textipa{[kos@]}. The only exception was the Turkish `t\i rpan' \textipa{[tWRpan]}. The feature coordinates computed by the basic model (\ref{EqLaplace}) -- (\ref{EqGraphLaplace}) were $(0.67, 0.22, 0.11)$. Again, the largest cluster (Slavic) is unrealistically dominant. What we would expect in this case is similarity distribution with approximately equal values for the first and the third cluster and a lower value for the second cluster, equal to approximately 2/3 of the other two values. The feature coordinates computed by the directed graph model (\ref{EqLaplaceFinal}) -- (\ref{EqGraphLaplaceFinal}) were $(0.37,0.25,0.38)$ and they exactly meet this expectation. The results of both tests are visualized in Figure  \ref{FigClustersSize2}.

\begin{figure}[h]
	\centering
	\includegraphics[width=0.4\textwidth]{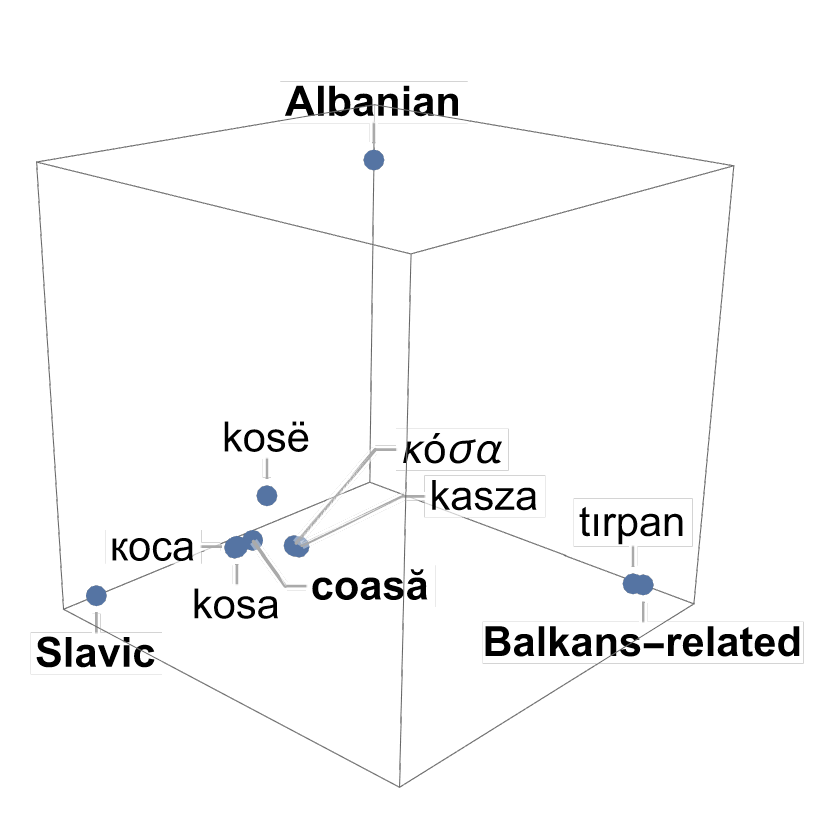} 
	\includegraphics[width=0.4\textwidth]{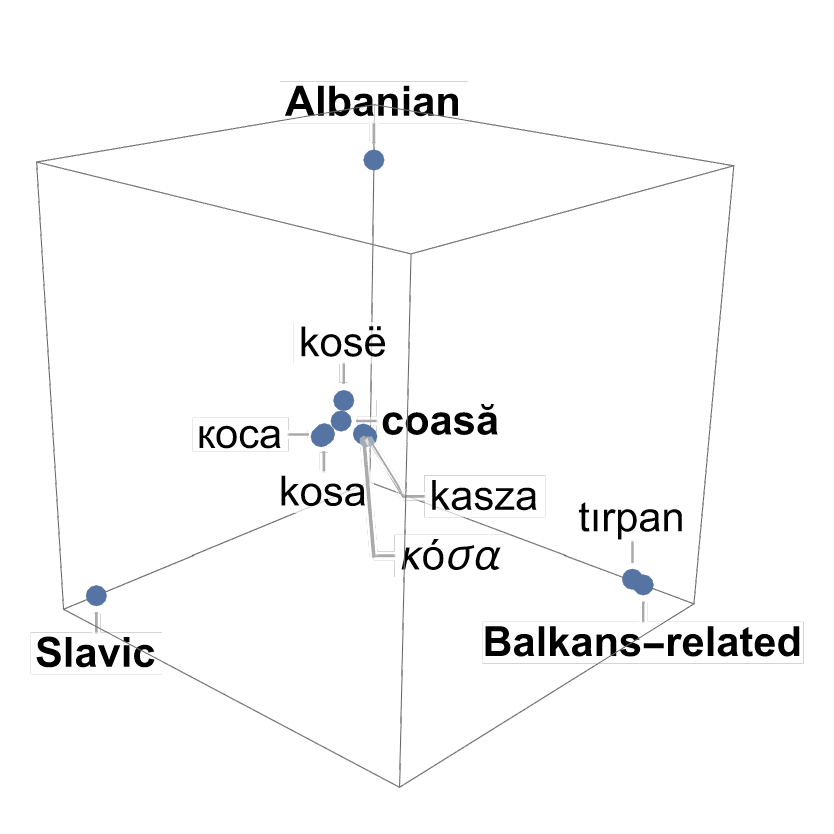} 
	\caption{Classification of words that have phonetically similar counterparts in several reference clusters of varying size: the Romanian `coas\v a' (scythe). Again, the directed graph model (\ref{EqLaplaceFinal}) -- (\ref{EqGraphLaplaceFinal}) provided much more realistic results (right) than the basic undirected graph model (left).}
	\label{FigClustersSize2}
\end{figure}

Before moving on to the main experiments, let us explain how they should be interpreted. For this purpose, we classified two languages: Czech and Arabic, both with reference clusters Germanic, Romance and Slavic I. By averaging the similarity distributions of the individual words, we obtained the overall lexical similarity distributions: $(0.86, 0.04, 0.1)$ for Czech and $(0.3, 0.36, 0.34)$ for Arabic. A plot with the average probabilities is shown in Figure \ref{FigCzechArabic}. For Czech, the interpretation of the probabilities is quite straightforward, since a high similarity with the Slavic cluster is observed. However, in case of Arabic, one could ask what exactly the numbers tell -- they could mean approximately one third of common words with each cluster but also no similar words at all. Therefore, with each experiment, we also display a histogram of normalized minimal word distances (NMWD). For a given classified word, the NMWD is the distance from its most similar counterpart among all reference words, divided by the average word length of the classified language. The histograms corresponding to Czech and Arabic are shown in Figure \ref{FigCzechArabic} on the right. As we can see, for Czech, the peak of the histogram is close to zero and the NMWD distribution indicates that almost each word has a very similar counterpart in the reference dataset. Thus, the computed average probabilities can be read as a real lexical similarity distribution. Arabic, on the other hand, has a peak quite far from zero and almost no words that are highly similar to any of the reference words. Thus, the average probabilities should be interpreted rather as an equal dissimilarity with all reference clusters. If there was an equal high similarity, we would get similar average probabilities but a histogram resembling the one that represents Czech.

\begin{figure}[h]
	\centering
   	\parbox{\textwidth}{
   		\parbox{.6\textwidth}{%
      		\centering
      		\includegraphics[width=0.4\textwidth]{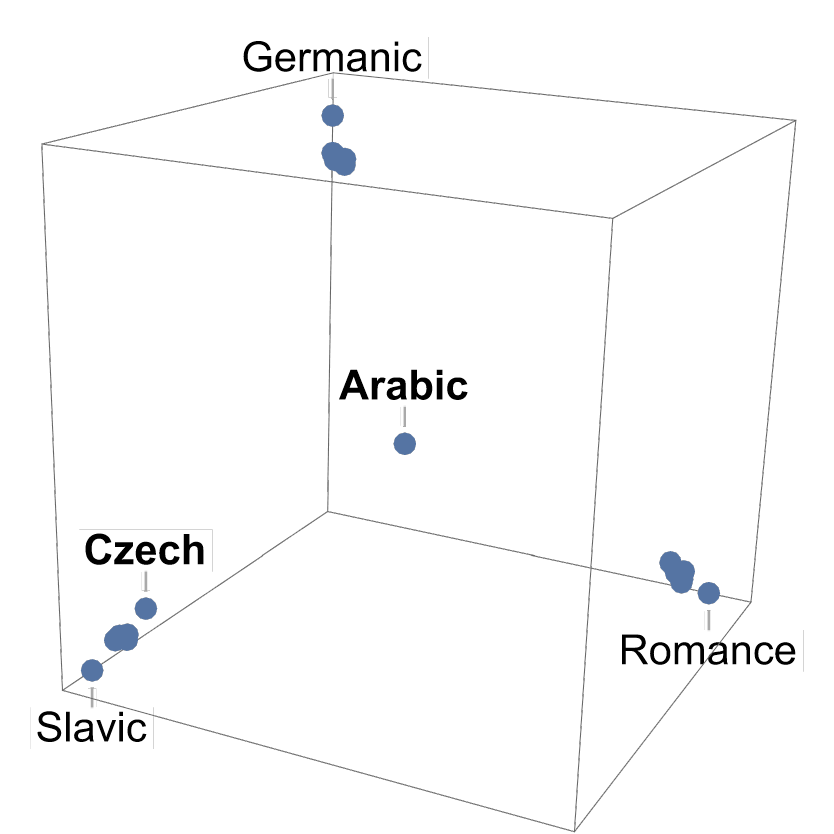}
    	}
   	\parbox{.3\textwidth}{%
      		\includegraphics[width=0.4\textwidth]{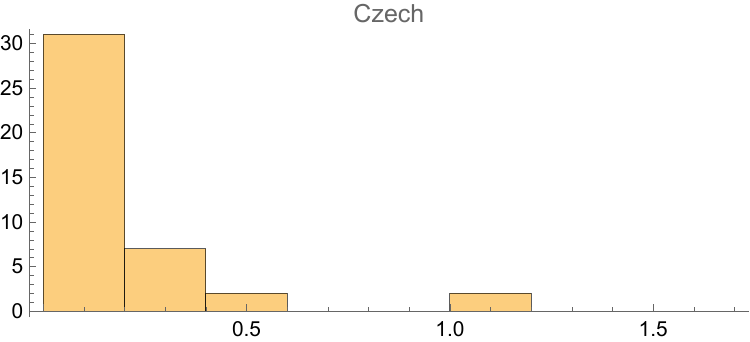}
      		\vskip1em
      		\includegraphics[width=0.4\textwidth]{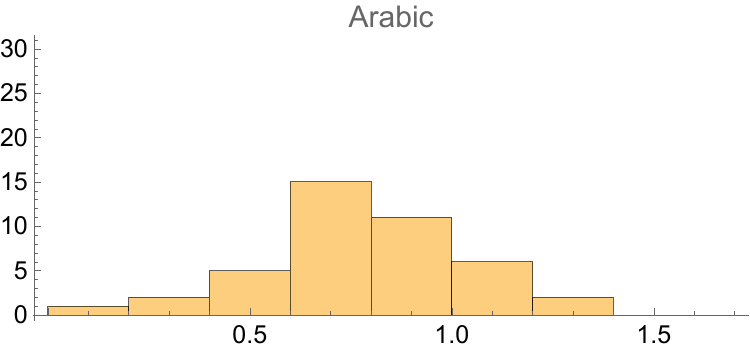}  
    	}
    	}
    	\caption{Classification of Czech and Arabic. On the left, we display the average feature space coordinates. On the right, we show the histograms of the normalized minimal word distance. For a language that finds a lot of similar words in the reference clusters, the histogram peaks close to zero. For a language that is dissimilar to all reference languages, the histogram peaks far from zero. This helps interpret the resulting feature coordinates of the classified language.}
    	\label{FigCzechArabic}
\end{figure}

\subsubsection{The Balkans}

From the linguistic point of view, the Balkans are a very interesting region involving several Slavic and Romance languages and also two unique languages, which are Greek and Albanian. In history, there was a significant Turkish influence due to the Ottoman Empire and also some German and Hungarian influence coming from the Austrian-Hungarian Empire. 

From the interesting mix of languages of the Balkans, we chose three for lexical similarity evaluation: Romanian, Aromanian and Albanian. All three of them are intriguing languages with a lot of unique words and an interesting history that also brought a significant number of borrowings. Since we are talking about a really multilingual territory, we used a quite large number of reference clusters: Slavic I., Romance, Germanic, Greek, Hungarian and Turkish. The results are presented in Figures \ref{FigRomanian} -- \ref{FigBalkans}. As we can see from the NMWD histogram, most Romanian and Aromanian words found a very similar counterpart among the reference languages. Based on this, the algorithm detected a relatively high similarity of Romanian and the Slavic cluster, which was more than double compared to any other reference cluster. For Aromanian, the three most similar clusters were Greek, Slavic and Turkish (in this order), which corresponds to the territory where Aromanians live. For Albanian, the histogram is less dramatic. Nevertheless, its shape indicates that the similarity chart still speaks about real similarity. Compared to Romanian and Aromanian, the similarity distribution is more uniform, with the highest values corresponding to the Slavic, the Romance and the Turkish cluster.

The last figure in this part -- Figure \ref{FigBalkans} -- shows the result of a slightly different experiment. In order to visualize the similarity of the languages in another way, we used only four reference clusters: Slavic I, Romance, Germanic and Balkans-related. This time, we did not compute the average probabilities, but we kept the values for all 42 concepts. Each language thus had $42\cdot 4$ coordinates from the interval $[0,1]$. The picture that we show is the 3D PCA projection of those coordinates.

\begin{figure}[h]
	\centering
	\includegraphics[width=0.4\textwidth]{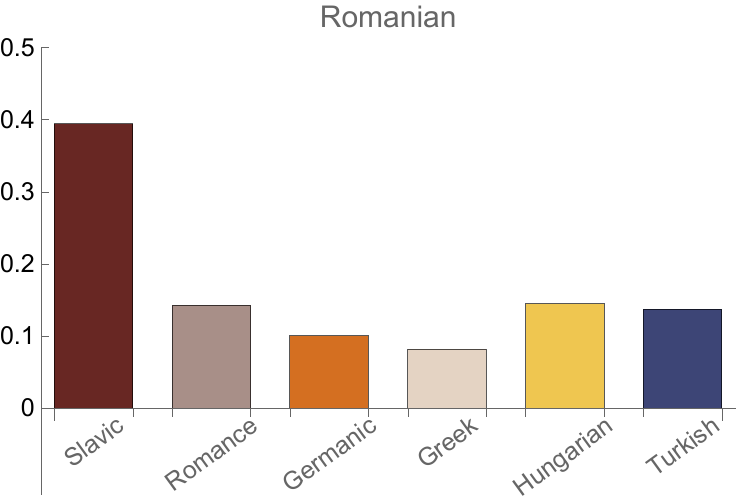} 
	\includegraphics[width=0.4\textwidth]{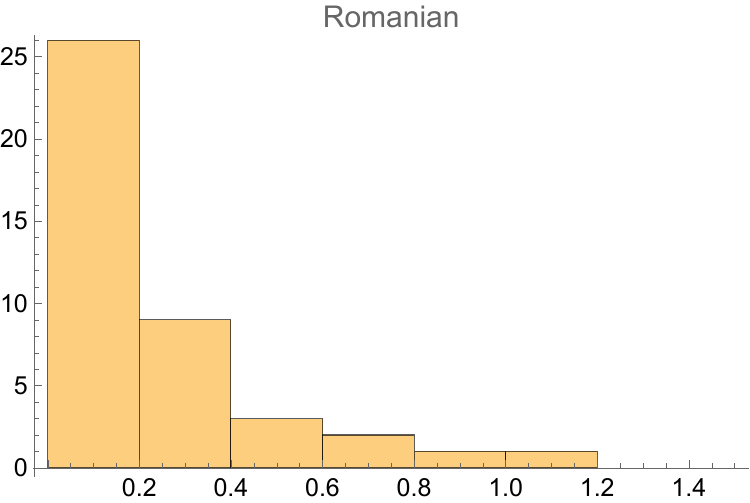} 
	\caption{The similarity distribution and the NMWD histogram for Romanian.}
	\label{FigRomanian}
\end{figure}

\begin{figure}[h]
	\centering
	\includegraphics[width=0.4\textwidth]{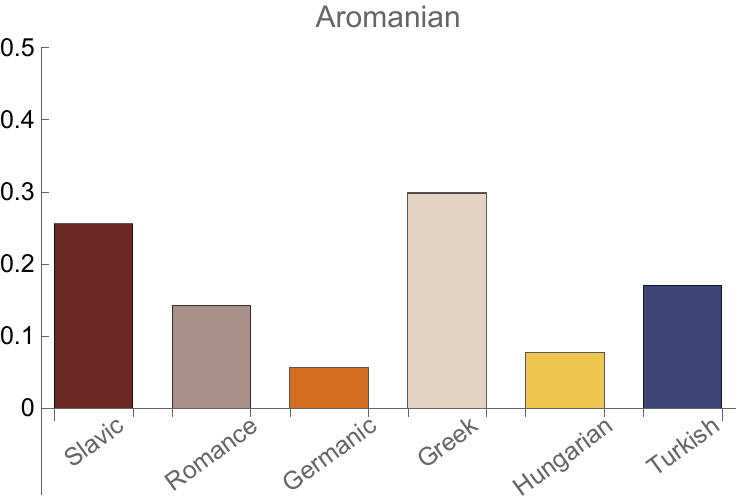} 
	\includegraphics[width=0.4\textwidth]{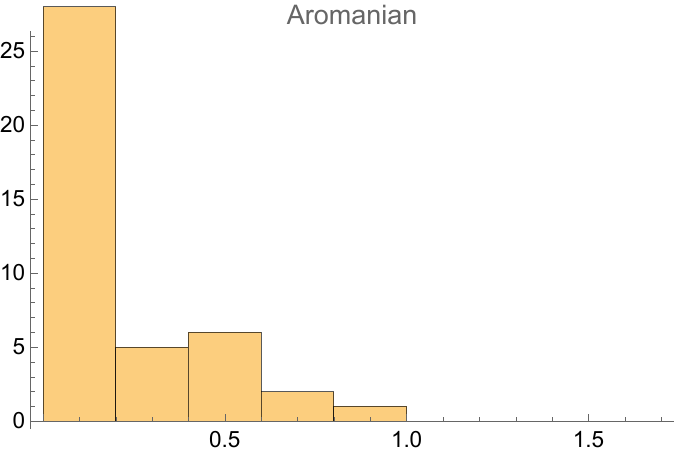} 
	\caption{The similarity distribution and the NMWD histogram for Aromanian.}
	\label{FigAromanian}
\end{figure}

\begin{figure}[h]
	\centering
	\includegraphics[width=0.4\textwidth]{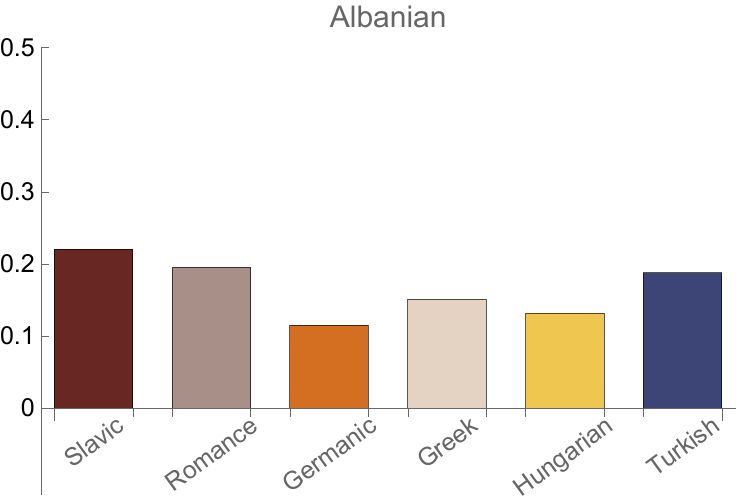} 
	\includegraphics[width=0.4\textwidth]{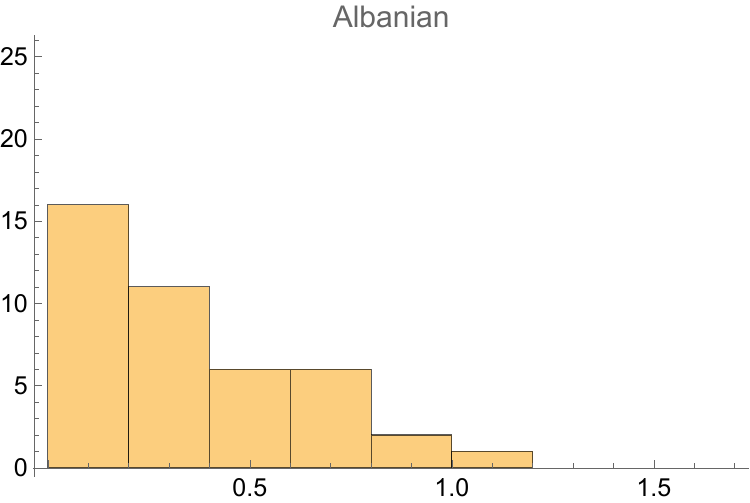} 
	\caption{The similarity distribution and the NMWD histogram for Albanian.}
	\label{FigAlbanian}
\end{figure}

\begin{figure}[h]
	\centering
	\includegraphics[width=0.4\textwidth]{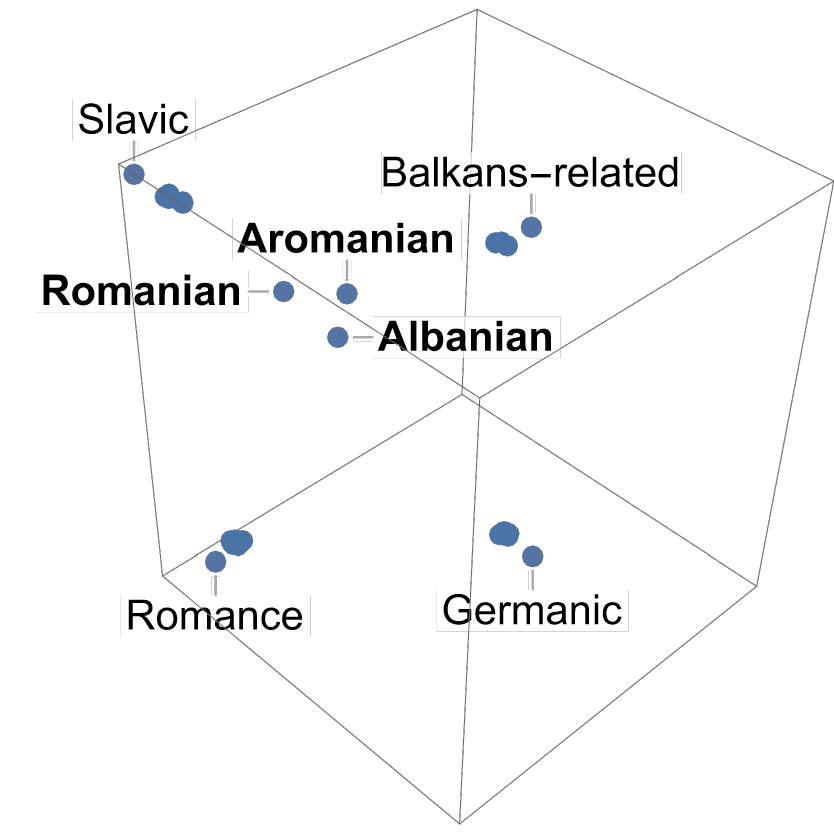} 
	\caption{The classification of Romanian, Aromanian and Albanian. The picture shows the 3D PCA projection of the feature space coordinates.}
	\label{FigBalkans}
\end{figure}

\subsubsection{Baltic languages}

The Baltic region is another interesting territory where several language families meet geographically or historically and one can expect various mutual influences. In order to explore them, we tried to classify all three main languages of the region - Lithuanian, Latvian and Estonian. 

We started with Lithuanian and Latvian and the reference clusters Slavic II, Romance and Germanic, which represent the three main language groups of Europe. The Slavic II cluster was used instead of Slavic I, since Belarusian and Polish are spoken in the direct neighborhood of the region. The results are shown in Figures \ref{FigBaltic} and \ref{FigBalticHistogram}. For Lithuanian, the similarity distribution is not very far from uniform, with some prevalence of the Slavic cluster. However, the corresponding NMWD histogram shows that, in fact, not that many close similarities were detected and that Lithuanian probably has a lot of words that are specific to it. The situation with Latvian is a little different. The most similar cluster seems to be Germanic and the NMWD histogram indicates more matches than the one representing Lithuanian. However, the peak is not right next to zero. Indeed, when we look at the Latvian list of words, we find there quite a lot of words similar to the reference words, but the similarity is more loose than in the previously discussed languages. In the range corresponding to the histogram peak, we find, for example, the words `nazis' \textipa{[nazis]}, meaning `knife', and `lāpsta' \textipa{[la:psta]}, meaning `shovel', that are cognates with Russian \foreignlanguage{russian}{нож}' \textipa{[no\:s]} and `\foreignlanguage{russian}{лопата}'  \textipa{[l5pata]} and similar words across all Slavic languages. In order to classify Latvian more exactly, we ran another experiment with Lithuanian as the fourth cluster. The results are shown in Figure \ref{FigLatvian}. The NMWD histogram shows that this time, most words found a similar counterpart. Based on this, we can conclude that Latvian shares the most of the similar words with the Germanic cluster, and then about an equal amount with the Slavic and Lithuanian cluster.

\begin{figure}[h]
	\centering
   	\parbox{\textwidth}{
   		\parbox{.6\textwidth}{%
      		\centering
      		\includegraphics[width=0.4\textwidth]{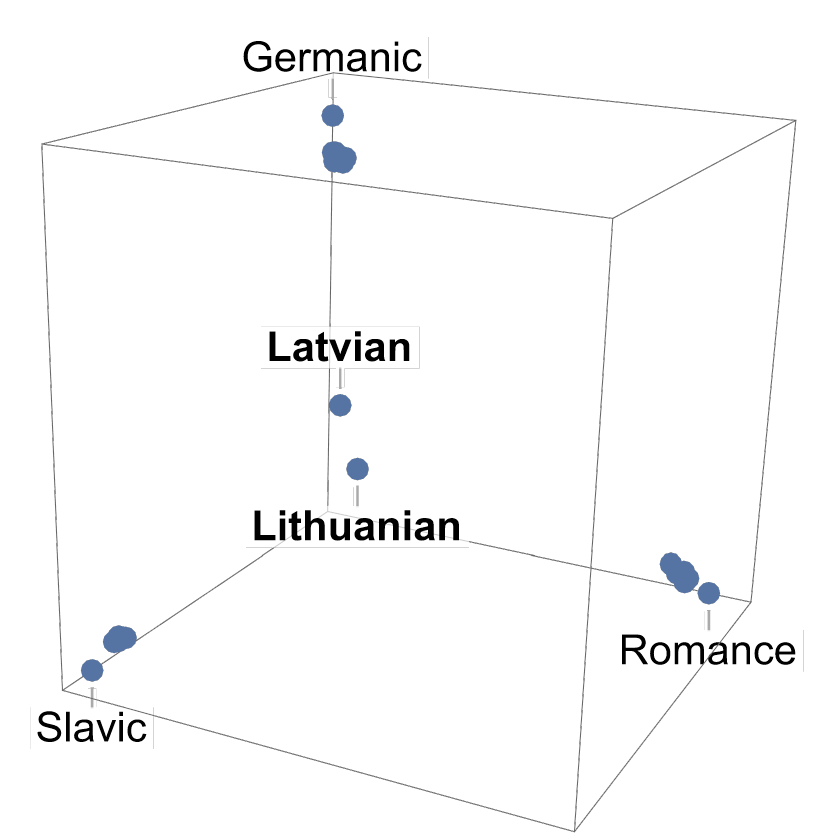}
    	}
   	\parbox{.3\textwidth}{%
      		\includegraphics[width=0.4\textwidth]{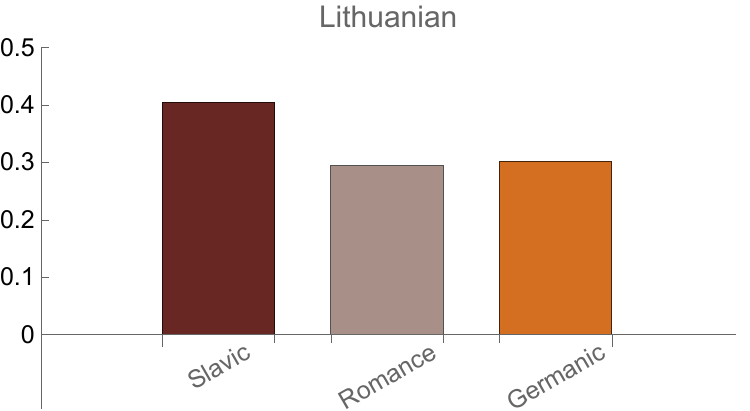}
      		\vskip1em
      		\includegraphics[width=0.4\textwidth]{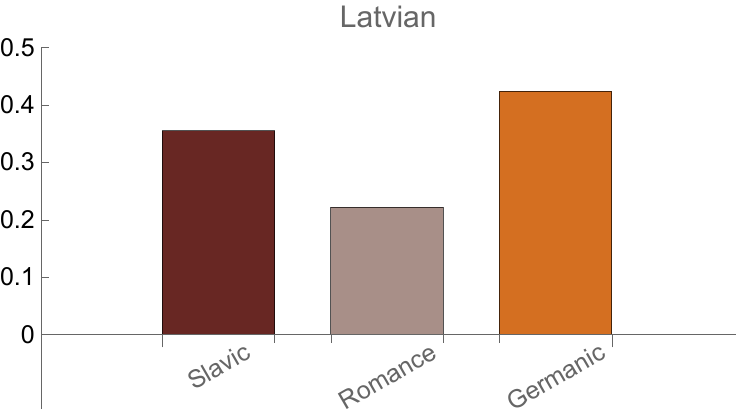}  
    	}
    	}
    	\caption{Classification of Lithuanian and Latvian -- the average feature space coordinates and the corresponding probability distributions.}
    	\label{FigBaltic}
\end{figure}

\begin{figure}[h]
	\centering
	\includegraphics[width=0.4\textwidth]{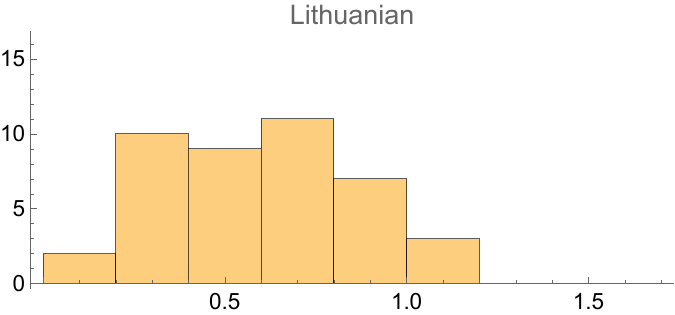} \hspace{0.3cm}
	\includegraphics[width=0.4\textwidth]{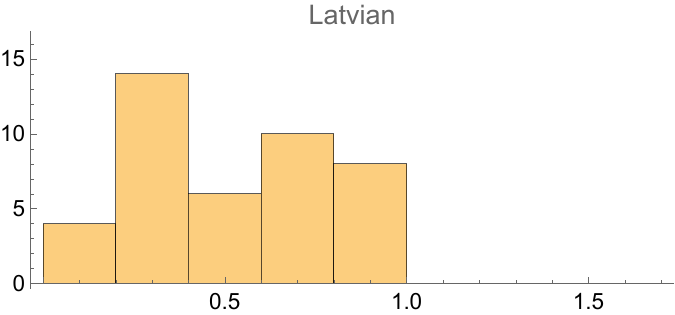} 
	\caption{Classification of Lithuanian and Latvian -- the NMWD histograms.}
	\label{FigBalticHistogram}
\end{figure}

\begin{figure}[h]
	\centering
   	\parbox{\textwidth}{
   		\parbox{.6\textwidth}{%
      		\centering
      		\includegraphics[width=0.4\textwidth]{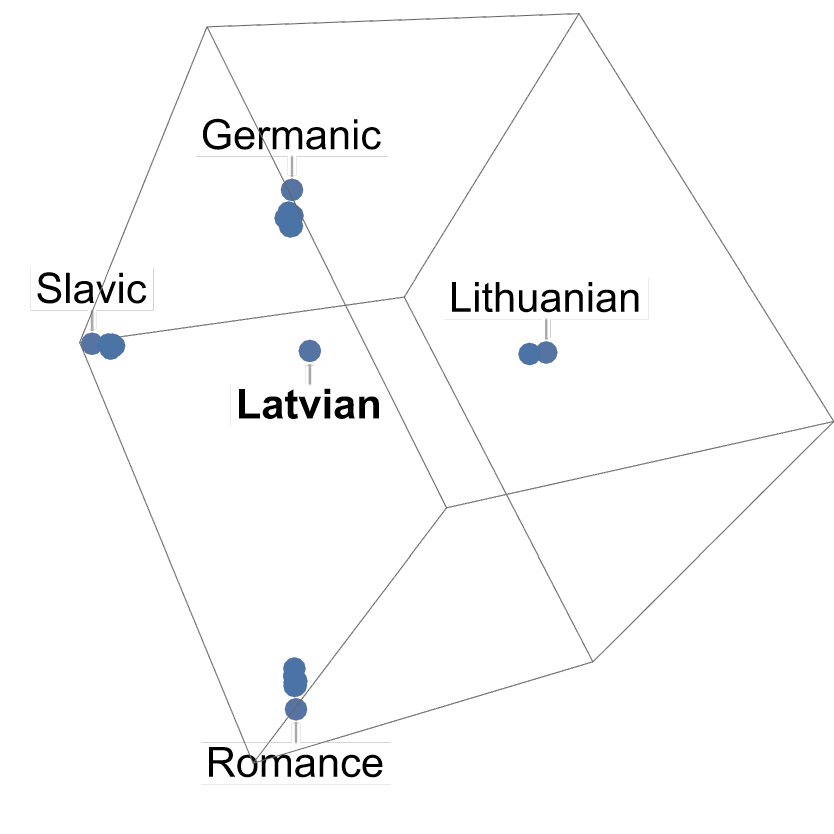}
    	}
   	\parbox{.3\textwidth}{%
      		\includegraphics[width=0.4\textwidth]{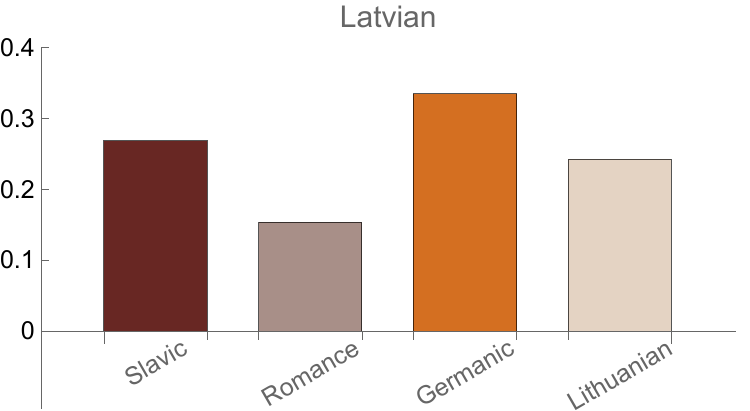}
      		\vskip1em
      		\includegraphics[width=0.4\textwidth]{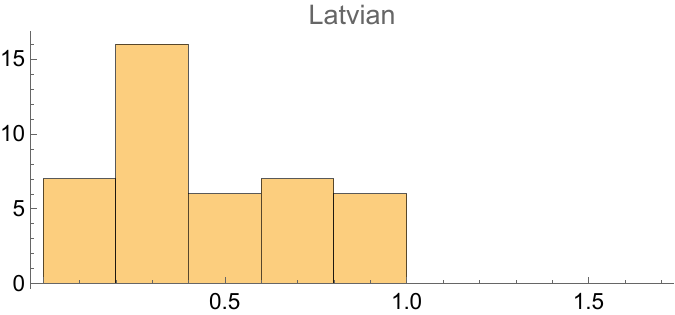}  
    	}
    	}
    	\caption{Classification of Latvian -- the 3D PCA projection of the feature space coordinates, the similarity distribution and the NMWD histogram.}
    	\label{FigLatvian}
\end{figure}

The last Baltic language to classify was Estonian. This is a Finnic language from the Baltic region and thus, in the first run, we used five reference clusters: Slavic II, Romance, Germanic, Baltic and Finnish. However, the similarity with the Romance cluster was low, so we re-ran the classification just for the four remaining clusters. The results are shown in Figure \ref{FigEstonian}. The NMWD histogram confirms a large number of matches with the reference words and the other pictures show a high and almost equal similarity with the Finnish and the Germanic cluster. In this experiment, the Finnish and the Latvian have themselves similarities with the other reference languages, which can be seen from their placement in the plot -- they are a little further from their hypothetical languages than the other languages included in the experiment. 

\begin{figure}[h]
	\centering
   	\parbox{\textwidth}{
   		\parbox{.6\textwidth}{%
      		\centering
      		\includegraphics[width=0.4\textwidth]{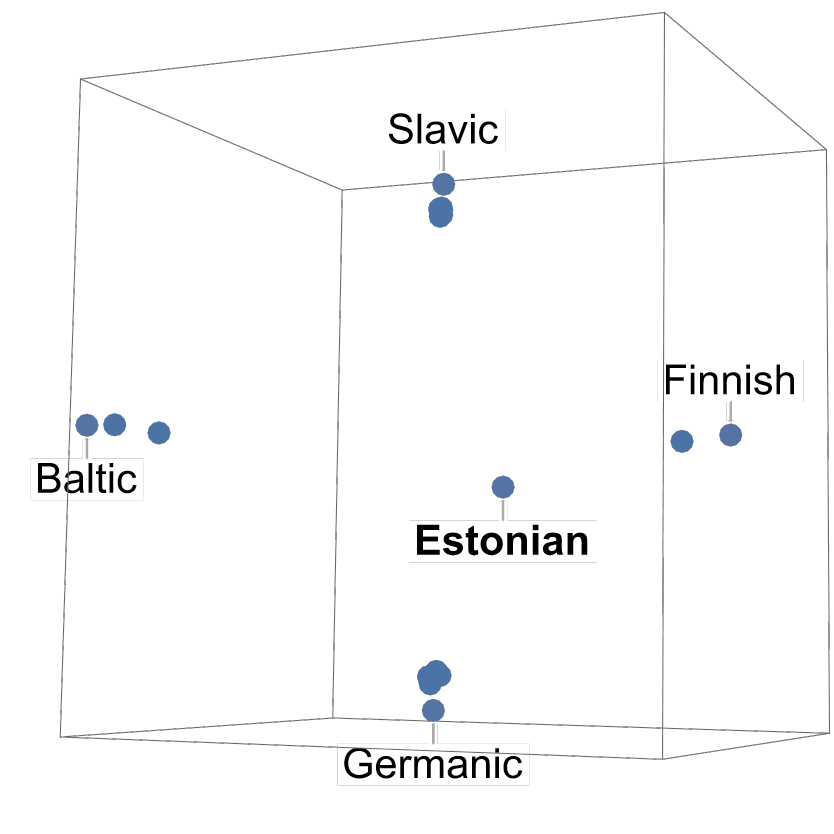}
    	}
   	\parbox{.3\textwidth}{%
      		\includegraphics[width=0.4\textwidth]{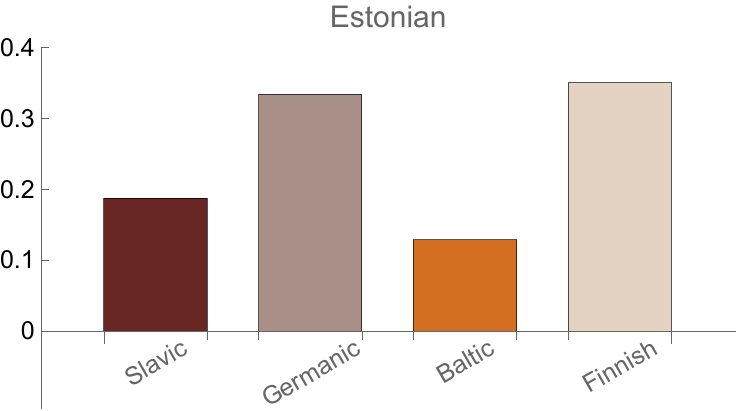}
      		\vskip1em
      		\includegraphics[width=0.4\textwidth]{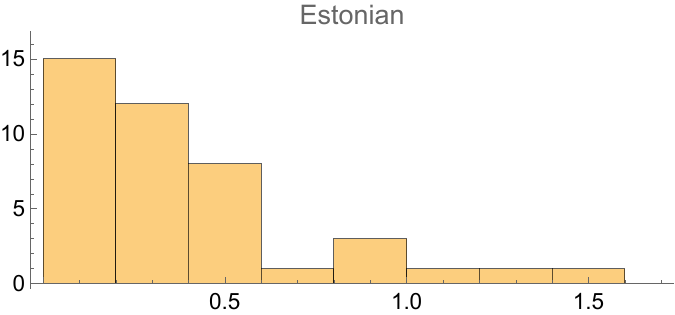}  
    	}
    	}
    	\caption{Classification of Estonian -- the 3D PCA projection of the feature space coordinates, the similarity distribution and the NMWD histogram.}
    	\label{FigEstonian}
\end{figure}

\subsubsection{Maltese}

Malta is a tiny but linguistically interesting country. It is the home of the Europe's only Semitic language, which is Maltese. Moreover, its second official language is English and it is situated in the close proximity of Italy, which brings along possible Germanic and Romance influences. Therefore, we performed an experiment with three reference clusters: Romance, Germanic and Arabic. The results are visualized in Figure \ref{FigMaltese}. This time, we observed a high similarity with the Romance cluster, a little lower similarity with Arabic and only a low similarity with the Germanic cluster. 

\begin{figure}[h]
	\centering
   	\parbox{\textwidth}{
   		\parbox{.6\textwidth}{%
      		\centering
      		\includegraphics[width=0.4\textwidth]{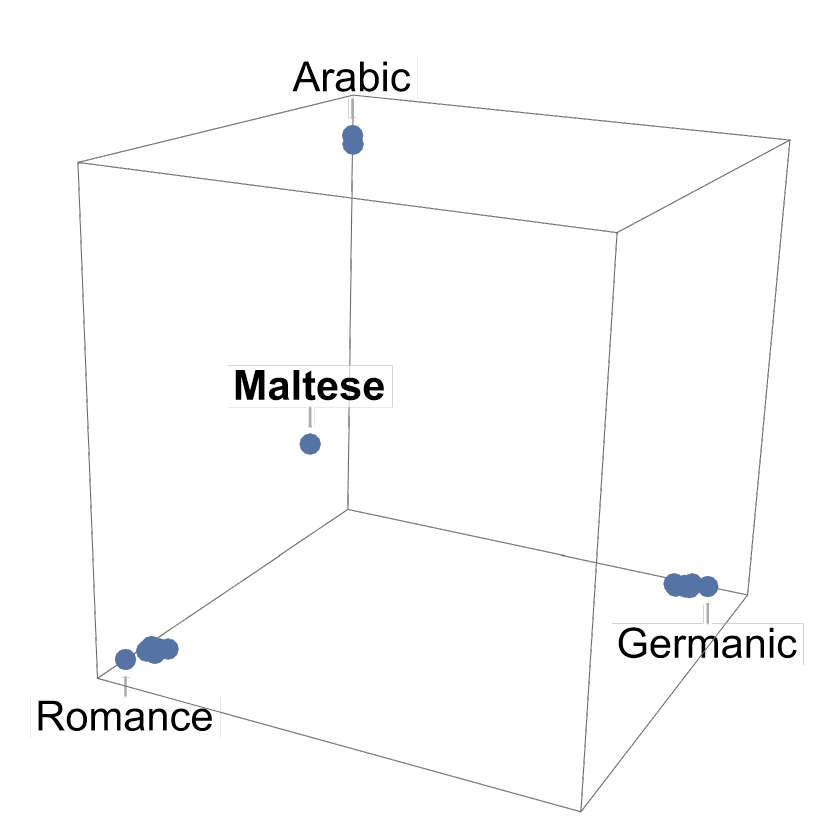}
    	}
   	\parbox{.3\textwidth}{%
      		\includegraphics[width=0.4\textwidth]{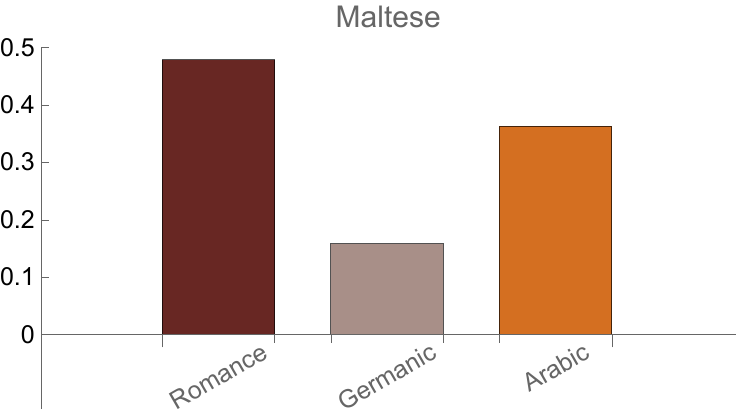}
      		\vskip1em
      		\includegraphics[width=0.4\textwidth]{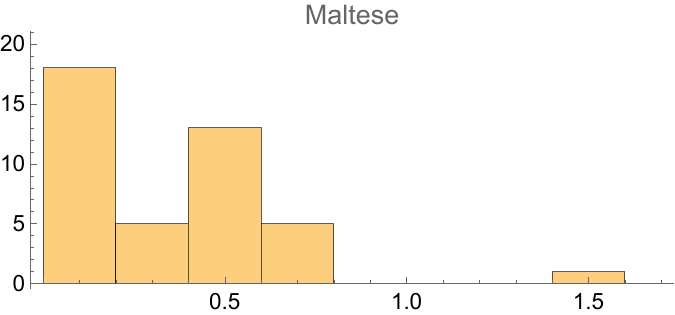}  
    	}
    	}
    	\caption{Classification of Maltese -- the average feature space coordinates, the similarity distribution and the NMWD histogram.}
    	\label{FigMaltese}
\end{figure}

\subsubsection{Breton}

For the last experiment, we moved to the North-West of Europe and tried to classify the Breton language. This is the only Celtic language that survived on the European mainland and it is spoken in a Romance territory, in France. Moreover, Bretagne is close to the British Islands that is the home of all other contemporary Celtic languages. Therefore, in our experiment, we used three reference clusters: Romance, Germanic and Celtic. The results are presented in Figure \ref{FigBreton}. As we can see, the algorithm detected a high similarity with other Celtic languages (Cornish and Welsh) but also a significant similarity with the Romance cluster. 

\begin{figure}[h]
	\centering
   	\parbox{\textwidth}{
   		\parbox{.6\textwidth}{%
      		\centering
      		\includegraphics[width=0.4\textwidth]{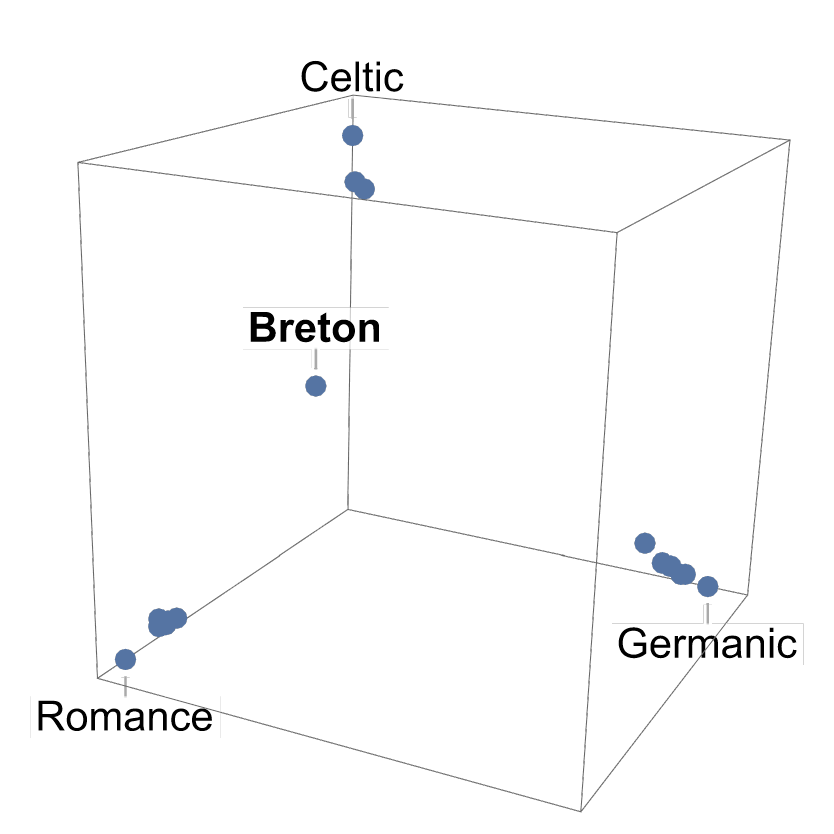}
    	}
   	\parbox{.3\textwidth}{%
      		\includegraphics[width=0.4\textwidth]{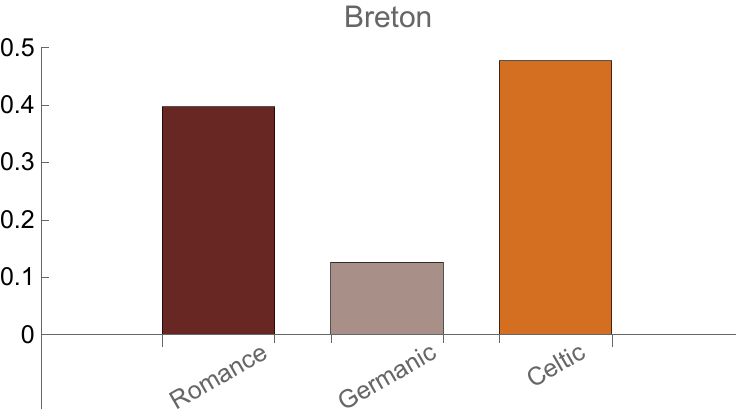}
      		\vskip1em
      		\includegraphics[width=0.4\textwidth]{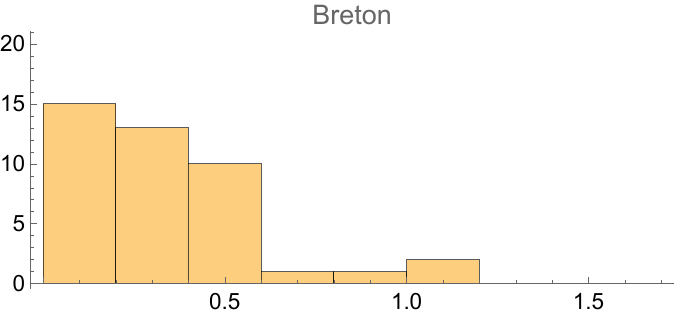}  
    	}
    	}
    	\caption{Classification of Breton -- the average feature space coordinates, the similarity distribution and the NMWD histogram.}
    	\label{FigBreton}
\end{figure}

\section*{Acknowledgements}

This work was supported by the grants APVV-23-0186 and VEGA 1/0249/24.

\end{document}